\newcommand{\norm}[1]{\left\lVert#1\right\rVert}
\DeclareMathOperator{\Tr}{Tr}
\newcommand{\bfit}[1]{\boldsymbol{#1}}
\newtheorem{prop}{Proposition}
\newtheorem{definition}{Definition}
\newtheorem{lemma}{Lemma}
\newtheorem{proof}{Proof}
\newtheorem{proof2}{Proof}
\newtheorem{proofsketch}{Proof Sketch}
\title{Probabilistic Embeddings with Laplacian Graph Priors}
\author[1]{\href{mailto:<jj@example.edu>?Subject=Your preprint}{Väinö Yrjänäinen}{}}
\author[1]{Måns Magnusson}
\affil[1]{%
    Department of Statistics\\
    Uppsala University\\
    Uppsala, Sweden
}
\begin{document}
\maketitle

\begin{abstract}
We introduce probabilistic embeddings using Laplacian priors (PELP). The proposed model enables incorporating graph side-information into static word embeddings. We theoretically show that the model unifies several previously proposed embedding methods under one umbrella. PELP generalises graph-enhanced, group, dynamic, and cross-lingual static word embeddings. PELP also enables any combination of these previous models in a  straightforward fashion. Furthermore, we empirically show that our model matches the performance of previous models as special cases. In addition, we demonstrate its flexibility by applying it to the comparison of political sociolects over time. Finally, we provide code as a TensorFlow implementation enabling flexible estimation in different settings.
\end{abstract}

\section{Introduction}

Word embeddings are a common approach to quantify semantic properties of language, such as the distributional similarity and relatedness of words \citep{allen2019vec}. Word embeddings have been utilized extensively for transfer learning in predictive models \citep{kim2014convolutional, iyyer2014political}, other downstream tasks \citep{lilleberg2015support, ma2015using}, and more recently, for analyzing and measuring meaning and bias in large corpora in the social sciences and the humanities \citep{tahmasebi2018survey,chaudhuri2019understanding,nguyen2020we,Rodriguez2022WordEW}. As an example, \citet{stoltz2020cultural} use the "meaning-space" spanned by word embeddings to navigate different meaning structures in the study of societal discourse. Hence, word embeddings have become a methodology to explore shared understandings and cultural meaning \citep{garg2018word,kozlowski2019geometry,bodell2019interpretable,rodman_2020}.

Probabilistic word embedding models have emerged as a way to model textual data in scientific applications \citep{rudolph2016exponential, bamler2017dynamic}. The advantages include flexible inclusion of prior knowledge, explicit handling of uncertainty, straightforward estimation, and usefulness in decision-making \citep{ghahramani2015probabilistic}. More versatile and flexible word embedding methods allow for increasingly complex research use cases, such as modelling temporal structure and different authors in the data \citep{stoltz2020cultural}. Indeed, the static Bernoulli word embeddings of \citet{rudolph2016exponential} have been extended to new probabilistic models handling specific structures of interests to social scientists, such as dynamic embeddings \citep{rudolph2018dynamic, bamler2017dynamic}, grouped-based embeddings \citep{rudolph2017structured}, and interpretable embeddings \citep{bodell2019interpretable}. Recently, also contextualized embeddings using transformers have started using probabilistic frameworks to model textual data \citep[e.g. see ][]{hofmann2020dynamic}. Including additional information, such as word graphs, authors, or cross-lingual structures, is of general interest as it further facilitates modelling different phenomena and relations. However, there is currently lacking more general frameworks for this type of modelling.

\subsection{Probabilistic word embeddings} 
\label{background}

As a part of a family of probabilistic embedding methods, \citet{rudolph2016exponential} formulated the Bernoulli Embeddings model. In the model, a corpus $(v_1,\dots,v_N)$ of $N$ words $v_i \in W$ is represented as one-hot vectors $x = (x_1,\dots,x_N)$. Each word at index $i$ has a symmetric $M$-sized context window $\mathrm{x}_{i} = \{x_{i-M}, \dots , x_{i+M} \} \setminus x_i$. The conditional distribution of $x_{iv}$ is 
\begin{equation} \label{CBOWbernoulliConditional}
x_{iv} |  \mathrm{x}_{i} \sim \textrm{Bernoulli}(p_{iv})\textrm{ }
\end{equation}
where
\begin{equation} \label{CBOWbernoulliConditional}
p_{iv} = \sigma(\eta_{iv})
\end{equation}
where $\sigma(\cdot)$ is the logistic function. Similarly to \citet{mikolov2013distributed}, the Bernoulli distribution is used to approximate the Categorical distribution in the model.

Each word type $v$ has its corresponding word and context vectors $\rho_v, \alpha_v \in \mathbb{R}^D$, $D \in \mathbb{N}$. The value $\eta_{iv}$ is the inner product between the embedding vector $\rho_v$ and the sum of the context vectors of the words surrounding position $i$
\begin{equation}
\label{eq:cbow_eta}
\eta_{iv} = \rho^T_v \sum_{w \in \mathrm{x}_{i} } \alpha_w\,.
\end{equation}
A priori, both the word and the context vectors follow a spherical multivariate Gaussian distribution as
\begin{equation}
\label{eq:be_rhoalpha_prior}
\rho_w, \alpha_w \sim \mathcal{N}(0, \gamma^{-1} \bfit{I})\,,
\end{equation}
for all $w \in W$, where $\bfit{I} \in \mathbb{R}^{D }$ and $\gamma \in \mathbb{R}_+$. The parameters $\rho, \alpha$ constitute $\theta = [\rho, \alpha] \in \mathbb{R}^{2V \times D}$.

With large corpora, computing expectations over the full posterior is expensive. Instead, MAP estimation is employed, and estimates for other quantities are calculated based on point estimates \citep{rudolph2018dynamic}.%The procedure by \citet{rudolph2016exponential} is essentially a $L_2$ regularized version of the CBOW model presented by \citet{mikolov2013efficient} combined with the negative sampling of \citet{mikolov2013distributed}, as can be seen in the CBOW likelihood
The likelihood consists of so-called positive ($x_{ps}$) and negative ($x_{ns}$) samples
\begin{equation}
\label{eq:cbow_loss_1}
\begin{split}
\log p(x \mid \theta) &= \log p(x_{ps} \mid \theta) + \log p(x_{ns} \mid \theta) \\
&= \sum_{i=1}^N \log(p_{iv_i}) + \sum_{i=1}^N \sum_{w \sim \mathcal{W}}^K \log(1 - p_{iw})
\end{split}
\end{equation}
where $v_i$ is the word type at index $i$, $K$ is the number of negative samples, and $\mathcal{W}$ is the smoothed empirical distribution of the word types. For each position $i$ in the data, $K$ word indices $w$ are sampled from $\mathcal{W}$. The CBOW likelihood is part of the Bernoulli Embeddings posterior
\begin{equation}
\begin{split}
\log p_{BE}(\theta \mid x) &= \log p(x \mid \theta) + \log p(\theta) + C  \\
\end{split}
\end{equation}
where the prior $p(\theta)$ is defined as in Eq. (\ref{eq:be_rhoalpha_prior}). Independently, \citet{bamler2017dynamic} presented a similar probabilistic embedding model using the SGNS likelihood.
\begin{comment}
\begin{equation}
\label{eq:sgns_conditional}
\begin{split}
x_{v} &| x_{w} \sim \textrm{Bernoulli}(p_{wv})\,, \\
x_{w} &\in \mathrm{x}_{i}
\end{split}
\end{equation}
i.e. $v$ appears with the probability $p_{wv}$ given $x_w$ is in the context window. Similarly to the CBOW model, $p_{wv}$ and $\eta_{wv}$ are calculated as a dot product of word and context vectors, although there is just a single context vector instead of a sum in this case
\begin{equation}
\label{eq:sgns_logit}
p_{wv} = \sigma(\eta_{wv})
\end{equation}
and
\begin{equation}
\label{eq:sgns_eta}
\eta_{wv} = \alpha_w ^T \rho_v\,
\end{equation}
Positive and negative samples are drawn similarly as in Eq. (\ref{eq:cbow_loss_1}).
\end{comment}
% This probabilistic formulation of SGNS is also applicable outside the dynamic setting it was introduced in.%\mmr{, and thus complements CBOW in probabilistic word embeddings.}
%where the prior is defined as
%\begin{equation}
%\log p(\theta) = - \lambda_0 \sum_{w \in W} \left( %\norm{\rho_w}^2 + \norm{\alpha_w}^2 \right)\,, %\label{eq:bernoulli_embedding_prior_loss}
%\end{equation}
%and where $w$ are word types in the vocabulary $W$, $\lambda_0 \in \mathbb{R}_+$, and $C$ is a normalizing constant.

%%%\person_b{Remove?} \person_a{for now I comment it out} While the original \texttt{word2vec} models were unregularized, $L_2$ regularization has recently been found to improve the performance of the similar Skip-gram with Negative Sampling model (SGNS) \citep{mikolov2013distributed} on certain tasks \citep{mu2019revisiting}.

\subsection{Laplacian Graph Priors}

The Laplacian matrix $\bfit{L}$ of a graph $\mathcal{G}$, with $E$ being the set of edges, is defined as $\bfit{L} = \bfit{D} - \bfit{A}$, where $\bfit{A}$ is the adjacency matrix of the graph, and $\bfit{D}$ is the degree matrix of the graph.
\begin{comment}
This yields a matrix
\begin{equation} \label{eqLap0}
    l_{ij}= 
\begin{cases}
    d(i),& \text{if } i = j\\
    -1,& \text{if } i \sim j\\
    0,              & \text{otherwise}
\end{cases}\,,
\end{equation}
where $d(i)$ is the degree of node $i$ and $\sim$ denotes an edge between two nodes. 
\end{comment}
The Laplacian can be extended for weighted and signed graphs and, by definition, all of these Laplacian matrices are positive semi-definite \citep{merris1994laplacian, das2005sharp}.
\begin{comment}
This yields a matrix
\begin{equation} \label{eqLap0}
    l_{ij}= 
\begin{cases}
    \sum_{k \sim i} w_{ik},& \text{if } i = j\\
    - w_{ij},& \text{if } i \sim j\\
    0,              & \text{otherwise}
\end{cases}\,,
\end{equation}
where $i$ and $\sim$ denotes an edge between two nodes, associated with a weight $w_{ij}$.
\end{comment}
By summing the Laplacian and any positive diagonal matrix, most commonly a scaled identity matrix, the augmented Laplacian becomes strictly positive-definite, and is a valid precision matrix $\bfit{\Sigma}^{-1}$ for the multivariate Gaussian distribution \citep{strahl2019scalable}. We denote such Laplacian precision matrices with $\bfit{L}_+$
\begin{equation}  \label{Lplus}
  \bfit{L}_+ = \lambda_1 \bfit{L} + \lambda_0 \bfit{{D}}
\end{equation}
where $\lambda_1, \lambda_0 \in \mathbb{R}_+$. Any side-information that is presentable as an undirected graph can then be used as a prior. Laplacian priors have been used in this way for tasks such as matrix factorization \citep{cai2010graph}, image classification \citep{zheng2010graph} and image denoising \citep{liu2014progressive}.

\subsection{Graph Side-Information in Word Embeddings}
There have been several attempts to incorporate side-information into word embeddings. \citet{faruqui2014retrofitting} improved the word embedding performance on several tasks by post hoc corrections, or \textit{retrofitting}, the embeddings using on the WordNet graph. Later, \citet{tissier2017dict2vec} improved word embedding accuracy by augmenting it with side information from dictionaries with the \texttt{dict2vec} embedding model.
%\begin{comment}
\begin{table}[h!]
\centering
\begin{tabular}{c c } 
\hline
\textbf{Word} & \textbf{Definition} \\
\hline
conscious & \textbf{aware} of one's own existence\\ aware &\textbf{conscious} or having knowledge \\
\hline
\end{tabular}
\caption{In the dict2vec method, the words \textit{conscious} and \textit{aware} are both mentioned in each others' definitions, and thus they form an edge in the graph.}
\label{table_word_definition}
\end{table}
%\end{comment}
Their method was based on reciprocal occurrence in the definitions of words. They then amend their likelihood with the linked pairs $\mathcal{G}$ as positive samples $x_{ps,\mathcal{G}}$, yielding the following likelihood function
\begin{equation}
\label{eq:dict2vec}
p(x \mid \theta) = p(x_{ps} \mid \theta) + p(x_{ns} \mid \theta) + p(x_{ps,\mathcal{G}} \mid \theta)
\end{equation}
This approach substantially boosted the performance of the model on word similarity tasks \citep{tissier2017dict2vec}.

\subsection{Dynamic and Grouped Embedding Models} \label{DynamicGrouped}
Based on the Bernoulli Embeddings model, \citet{rudolph2018dynamic} presented the Dynamic Bernoulli model (DBM), where each word vector is modeled as a random walk. A priori, the word vectors at $t=0$ are distributed
\begin{equation}
\label{eq:dynamic_bernoulli_1}
    \rho_{0,w} \sim \mathcal{N}(0, \gamma_0^{-1}\bfit{I})\,,
\end{equation}
for all word types $w \in W$ and for all timesteps $t \in \{1,2 \dots , T\}$
\begin{equation}
\label{eq:dynamic_bernoulli_2}
    \rho_{t,w} \sim \mathcal{N}(\rho_{t-1}, \gamma_1^{-1} \bfit{I})\,,
\end{equation}
for all word types $w \in W$, while $\gamma_0, \gamma_1 \in \mathbb{R}_+$ and $\bfit{I}$ is a $D$-dimensional identity matrix. The context vectors are shared between the timesteps, and are distributed as in Eq. (\ref{eq:be_rhoalpha_prior}) for all word types $w \in W$. The likelihood is similar to Eq (\ref{eq:cbow_loss_1}), using the timestep of each data point in question.
%\begin{equation}
%\label{eq:dynamic_bernoulli_3}
%    \alpha_{w} \sim \mathcal{N}(0, \lambda_0)\,,
%\end{equation}

The hierarchical grouped Bernoulli model (GBM) \citep{rudolph2017structured} is similar to DBM, but instead of timesteps, the corpus is divided into two or more groups $s \in S$. Each word $w$ corresponds to a group word vector $\rho_{0,w}$, and the actual word vectors $\rho_{s,w}, s \in S$ are normally distributed with the group vector as the mean
\begin{equation}
\label{eq:grouped_bernoulli_1}
    \rho_{0,w} \sim \mathcal{N}(0, \gamma_0^{-1} \bfit{I})
\end{equation}
\begin{equation}
\label{eq:grouped_bernoulli_2}
    \rho_{s,w} \sim \mathcal{N}(\rho_{0, w}, \gamma_1^{-1} \bfit{I})
\end{equation}
where $\gamma_0, \gamma_1 \in \mathbb{R}_+$ and $\bfit{I}$ is a $D$-dimensional identity matrix. The context vectors are shared between the groups, and are distributed as in Eq. (\ref{eq:be_rhoalpha_prior})
%\begin{equation}
%\label{eq:grouped_bernoulli_3}
%    \alpha_{w} \sim \mathcal{N}(0, \lambda_0)
%\end{equation}
for all word types $w \in W$. The likelihood is similar to in Eq (\ref{eq:cbow_loss_1}), using the respective group of each data point in question.

\subsection{Cross-Lingual Word Embeddings} \label{cross-lingualTheory}
Cross-lingual word embeddings embed the words of two or more languages $\{A, B, \dots \}$ into a shared embedding space $\mathbb{R}^{D}$. This can be done using multilingual corpora and a graph $\mathcal{G}_{AB \dots } = (W_{AB \dots } , E_{AB \dots })$ of translation pairs in the combined set of words $W_{AB \dots }$ in languages $A,B, \dots$. Commonly used \textit{mapping methods} first separately train the monolingual embeddings, for example using \texttt{word2vec}, and then find the mapping $\mathbb{R}^D \to \mathbb{R}^D$ that minimizes the squared difference between the word translation pairs $E_{AB} = \{(w_{A,1}, w_{B,1}),  \dots, (w_{A,N}, w_{B,N}) \}$, where $w_{X,i}$ is a word type in language $X$ \citep{ruder2019survey}. 

Linear transformations have been found to work well in practice, e.g. to outperform feedforward neural networks
\citep{mikolov2013exploiting,ruder2019survey}.
%and are a common way to map the vectors into the same space \citep{ruder2019survey}.
The optimal linear transformation minimizes
\begin{equation}
\label{eq:mapping_methods}
\bfit{\hat{W}} = \arg \min_{\bfit{W}} \sum_{(v,w) \in E_{AB}} \norm{ \bfit{W} \rho_{A,v} - \rho_{B,w}}^2\,,
\end{equation}
where $\rho_{A,v}$ is the word vector for $v$ in the language A and $\rho_{B,w}$ is the word vector for $w$ in the language B. Some variations further restrict the linear transformation matrix to be orthogonal, which has been found to improve the Bilingual Lexicon Induction performance of the embeddings \citep{xing2015normalized}, while preserving monolingual distances \citep{smith2017offline}.
The optima of the linear and orthogonal mappings have closed-form solutions \citep{artetxe2016learning}. Another common cross-lingual method is the pseudo-multilingual corpora method. It uses a regular embedding method and randomly swaps translation pairs $E_{AB}$ with $\frac{1}{2}$ probability \citep{gouws2015simple}. This approach has been proven to be equivalent to using parameter sharing for the word vectors of the translation pairs \citep{ruder2019survey}.

% \vy{This can probably go because we don't refer to it?} Another common cross-lingual method is the pseudo-multilingual corpora method. It uses a regular embedding method and randomly swaps translation pairs $E_{AB}$ with $\frac{1}{2}$ probability \citep[or $1/N$ if there are multiple translations per word, ][]{gouws2015simple}). This approach has been proven to be equivalent to forcing the word vectors of the translation pairs to be identical \citep{ruder2019survey}.

\subsection{Contributions and Limitations}
% What is the problem? What do we do? contributions
This paper introduces the probabilistic embeddings with the Laplacian priors (PELP) model. We summarize the main contributions as:
\begin{enumerate}
    \item We introduce the PELP model that can incorporate any undirected graph-structured side-information using either the continuous bag-of-words (CBOW) or the skip-gram with negative samples (SGNS) likelihood. 
    \item We introduce a cross-lingual PELP model to handle multiple languages.
    \item We prove that PELP generalizes many previous word embedding models, both monolingual and cross-lingual. 
    \item We show that PELP enables the combination of multiple previous embedding models into joint models (e.g. dynamic and group embedding models).
    % This is basically the same as contribution 2 and 3.
    %\item As a by-product of the PELP model, we introduce a probabilistic version of cross-lingual and group-dynamic embeddings.
    \item We provide a TensorFlow implementation to estimate probabilistic word embeddings with different Laplacian priors. The implementation enables GPU parallelism on large corpora for any weighted and unweighted Laplacian.
\end{enumerate}

Although the proposed PELP model is flexible and general, it also has limitations. The main limitation is that the edges in graph Laplacian priors are necessarily symmetric if we interpret the model as a prior. Hence, directed graph side-information must be treated as undirected if included in the model formally as a prior. Furthermore, the embeddings presented in this paper are static, compared to contextualized embeddings from transformer neural networks such as in \citet{hofmann2020dynamic}.

\section{Probabilistic Word Embeddings with Laplacian Priors}
\label{PELP_section}
We propose Probabilistic Embeddings with Laplacian priors (PELP). The model combines the Bernoulli embeddings model by \citet{rudolph2016exponential} as specified in Eq. (\ref{CBOWbernoulliConditional}) with a graph Laplacian prior, both for the CBOW model and the SGNS model. The model is then expanded to a cross-lingual setting. This chapter describes the models and shows their properties and theoretical generalization of previous models.

\subsection{Monolingual PELP}
%\begin{restatable}{definition}{pelpdefinition}
\label{definition:pelp_definition}
%PELP.
%
Let $\mathcal{G}$ be a graph $(W, E)$ with a set of edges ${E \subseteq \{\{v,w\}\mid v,w \in W\;{\textrm {and}}\;v\neq w\}}$ in the set of word types $W$. We can formulate a Laplacian matrix according to Eq. (\ref{Lplus}) to arrive at a positive-definite Laplacian. As each word type $v$ has its dedicated word and context vectors $\rho_v, \alpha_v$, we can set this as the precision matrix for the prior distribution of $\theta = [\rho, \alpha]$ as
\begin{equation}
\label{eq:pelp_dimension_wise}
p(\theta) = \prod_{i=1}^{D} \mathcal{N}(\theta_{:i} \mid 0, \bfit{L}_+^{-1} )\,,
\end{equation}
where $D$ is the dimensionality of the embeddings, and $\bfit{L}_+$ is the augmented Laplacian matrix as defined in Eq. (\ref{Lplus}). 

In its logarithmic form, the Laplacian prior becomes a sum over the squared differences between the parameters $\theta_v, \theta_w$
\begin{equation}
\label{eq:pelp_sum_of_squares}
\begin{split}
\log p(\theta)  = & -\frac{1}{2} \Tr(\theta^T \bfit{L}_+ \theta) + C \\
= &-\frac{\lambda_1}{2} \sum_{(v,w) \in E} \norm{\theta_v - \theta_w}^2 \\
&-\frac{\lambda_0}{2} \sum_{v \in W} \left( \norm{\rho_v}^2 + \norm{\alpha_v}^2 \right) + C\\
\end{split}
\end{equation}
For weighted graphs $\mathcal G = (W,E)$, ${E \subseteq \{\{v,w\} \to a_{ij} \mid v,w \in W\;{\textrm {and}}\;v\neq w\}}$, this prior is
\begin{equation}
\label{eq:pelp_sum_of_squares}
\begin{split}
\log p(\theta)  = & -\frac{1}{2} \Tr(\theta^T \bfit{L}_+ \theta) + C \\
= &-\frac{1}{2} \sum_{(v,w) \in E} a_{ij} \norm{\theta_v - \theta_w}^2 \\
&-\frac{\lambda_0}{2} \sum_{v \in W} \left( \norm{\rho_v}^2 + \norm{\alpha_v}^2 \right) + C\\
\end{split}
\end{equation}
The full posterior is
\begin{equation}
\label{eq:pelp_loss_terms_sum}
\log p(\theta \mid x) = \log p(x \mid \theta) + \log p(\theta)
\end{equation}
where the likelihood, $\log p(x \mid \theta)$, can either be the CBOW or the SGNS likelihood. In either case, negative sampling (Eq. \ref{eq:cbow_loss_1}) is used.
%\end{restatable}

%\begin{restatable}{definition}{cross-lingualdefinition}cross-lingual PELP.
\subsection{Cross-lingual PELP}
\label{definition:cross-lingual_pelp}

For cross-lingual data, PELP can easily be extended to the cross-lingual setting. For two langauges $A$ and $B$, cross-lingual PELP has the parameters $\rho = (\rho_A, \rho_B)$ and $\alpha = (\alpha_A, \alpha_B)$, and $\theta = (\rho_A, \rho_B, \alpha_A,\alpha_B)$, which can also be organized into $\theta_A = (\rho_A, \alpha_A)$ and $\theta_B = (\rho_B, \alpha_B)$. The likelihood consists of the monolingual likelihoods
\begin{equation}
\label{eq:cross-lingual_pelp}
\begin{split}
\log p(x \mid \theta) = & \log p(x_A \mid \theta_A) \\
+& \log p(x_B \mid \theta_B)  \\
\end{split}
\end{equation}
where $x_A, x_B$ are the data for languages $A$ and $B$, respectively.

The graph $\mathcal G_{AB} = (W_{AB}, E_{AB})$ for the Laplacian consists of translation pairs, but can also incorporate other graph-information. The cross-lingual Laplacian prior $p(\theta_A, \theta_B)$ can be formulated as
\begin{equation}
\label{eq:translation_pairs}
\begin{split}
&\log p(\theta_A, \theta_B) = \\%- \lambda_1 \Omega(\theta_A, \theta_B) + C \\
&-\frac{\lambda_1}{2} \sum_{(v,w) \in E_{AB}} \norm{ \rho_{A_v} - \rho_{B_w}}^2 \\
&-\frac{\lambda_1}{2} \sum_{(v,w) \in E_{AB}} \norm{ \alpha_{A_v} - \alpha_{B_w}}^2 \\
&-\frac{\lambda_0}{2} \sum_{v \in W} \norm{ \rho_{v}}^2 + \norm{ \alpha_{v}}^2 + C
\end{split}
\end{equation}
where $\lambda_1 \in \mathbb{R}_+$ is a hyperparameter, and $C$ is a normalizing constant. The full posterior is
\begin{equation}
\log p(\theta \mid x) = \log p(x \mid \theta) + \log p(\theta_A, \theta_B)
\end{equation}
%\end{restatable} 

\subsection{Generalization of Previous Methods}
\label{sec:relationship_with_previous_word_embeddings}
Many word embedding models can be seen as special cases of specific Laplacian priors or are closely related to a Laplacian prior with a particular graph. In the following section, we will use the following assumptions to prove that many previous word embedding methods are special cases of the PELP model.
\begin{itemize}
    \item[(a)] The likelihood functions of two models that are being compared are identical, i.e. both are CBOW or SGNS as defined in Section \ref{background}.
    \item[(b)] There exists a \textit{maximum a posteriori} estimate $\hat{\theta} = {\arg \max}_\theta \log (\theta|x)$.
\end{itemize}
%\subsubsection{Word Embeddings with Graph-Based Side-Information}
%\label{sec:graph_based_side_information}
By penalizing the squared difference between word pairs, PELP bears significant similarities with Faruqui's (\citeyear{faruqui2014retrofitting}) retrofitting. Unlike \citet{faruqui2014retrofitting}, though, the graph-structure additional constraint is applied during estimation, which unlike Faruqui's method can indirectly affect words outside the side-information graph. In addition, PELP is closely related to \texttt{dict2vec} \citep{tissier2017dict2vec}. While \texttt{dict2vec} adds positive samples of the strong pairs ad-hoc, the graph structure of these pairs can be used as a Laplacian prior. We formulate this theoretical relationship with the following proposition.
%By penalizing the squared difference between word pairs, PELP bears significant similarities with Faruqui's (\citeyear{faruqui2014retrofitting}) retrofitting. Unlike \citet{faruqui2014retrofitting}, though, the graph-structure additional constraint is applied during estimation, which unlike Faruqui's method can indirectly affect words outside the side-information graph [ADD PROOF IN APPENDIX MAYBE??]. In addition, PELP is closely related to \texttt{dict2vec} \citep{tissier2017dict2vec}. While \texttt{dict2vec} adds positive samples of the strong pairs ad-hoc, the graph structure of these pairs can be used as a Laplacian prior. We formulate this relationship with the following proposition.

\begin{restatable}{prop}{propositionone}
\label{prop:dict2vec}
Let the \texttt{dict2vec} model be defined as in Eq. (\ref{eq:dict2vec}), augmented with  $\rho_v, \alpha_v \sim N(0, \gamma_0^{-1} \bfit{I})$ priors for all word types $v \in W$. Also, let the PELP model be defined as in Definition \ref{definition:pelp_definition}. Further assume that assumptions (a)-(b) hold, and let $\mathcal{G}$ be a graph shared by the \texttt{dict2vec} and the PELP model with the augmented Laplacian $\bfit{L}_+ = \bfit{L} + \bfit{D}$. Then, for any $\hat{\theta}$ in dict2vec model, there exists a PELP model with a specific weighted Laplacian $\bfit{L}^\star$ and the augmenting diagonal matrix $\bfit{D}^\star$ with the same stationary points as the \texttt{dict2vec} model.
\end{restatable}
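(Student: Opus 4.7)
The plan is to compare the two log-posteriors gradient-wise at $\hat\theta$ and show that $\bfit{L}^\star$ and $\bfit{D}^\star$ can be engineered to exactly absorb the two terms that differ between them. Concretely, subtracting the PELP log-posterior from the \texttt{dict2vec} log-posterior and using assumption~(a) that the CBOW/SGNS pieces $\log p(x_{ps}\mid\theta)+\log p(x_{ns}\mid\theta)$ are common, we get (up to an additive constant)
\begin{equation*}
\log p_{d2v}(\theta\mid x)-\log p_{PELP}(\theta\mid x)=\log p(x_{ps,\mathcal{G}}\mid\theta)-\tfrac{\gamma_0}{2}\lVert\theta\rVert^{2}+\tfrac{1}{2}\Tr(\theta^{T}\bfit{L}_+^\star\theta).
\end{equation*}
Hence a point $\hat\theta$ that is stationary for \texttt{dict2vec} (assumption~(b)) is also stationary for PELP iff the gradient of this difference vanishes at $\hat\theta$, i.e.\ iff
\begin{equation*}
\bfit{L}_+^\star\hat\theta=\gamma_0\hat\theta-\nabla\log p(x_{ps,\mathcal{G}}\mid\hat\theta).
\end{equation*}

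Next I would construct $\bfit{L}^\star$ and $\bfit{D}^\star$ so that this equation holds. Each positive graph sample in $\log p(x_{ps,\mathcal{G}}\mid\theta)$ contributes a term of the form $\log\sigma(\rho_v^{T}\alpha_w)$, whose gradient with respect to $\rho_v$ at $\hat\theta$ is $(1-\sigma(\hat\rho_v^{T}\hat\alpha_w))\hat\alpha_w$, and analogously with $\rho$ and $\alpha$ swapped. A single weighted Laplacian edge between the nodes corresponding to $\rho_v$ and $\alpha_w$ with weight $a^\star_{vw}$ contributes $a^\star_{vw}(\hat\alpha_w-\hat\rho_v)$ to the gradient with respect to $\rho_v$; picking
\begin{equation*}
a^\star_{vw}=1-\sigma(\hat\rho_v^{T}\hat\alpha_w)
\end{equation*}
matches the $\hat\alpha_w$-component of the target exactly. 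The leftover self-contribution $-a^\star_{vw}\hat\rho_v$ and the ambient $\gamma_0\hat\rho_v$ from the \texttt{dict2vec} Gaussian prior are then both absorbed into a single diagonal entry
\begin{equation*}
d^\star_{\rho_v}=\gamma_0-\sum_{w:(v,w)\in E}a^\star_{vw},
\end{equation*}
and symmetrically for each $\alpha_v$ entry. Assuming the $\hat\rho_v$ and the $\{\hat\alpha_w\}_{(v,w)\in E}$ are linearly independent, matching coefficients vector-component by vector-component reproduces the displayed fixed-point equation row by row, so $\hat\theta$ is a stationary point of PELP with this $\bfit{L}^\star,\bfit{D}^\star$.

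The main obstacle is verifying that the resulting $\bfit{L}_+^\star=\bfit{L}^\star+\bfit{D}^\star$ is an admissible augmented Laplacian, i.e.\ that the edge weights $a^\star_{vw}$ are non-negative (which follows automatically from $a^\star_{vw}=1-\sigma(\cdot)\in(0,1)$) and that the diagonal $\bfit{D}^\star$ is positive so that $\bfit{L}_+^\star$ is strictly positive-definite. The positivity of $d^\star_{\rho_v}$ requires $\gamma_0>\sum_{w:(v,w)\in E}(1-\sigma(\hat\rho_v^{T}\hat\alpha_w))$, which is the nontrivial part; I would handle it either by imposing it as a mild assumption on the \texttt{dict2vec} ridge parameter, by appealing to the signed-Laplacian extension mentioned earlier in the paper (which the paper permits and which removes the sign constraint), or by rescaling $\lambda_0,\lambda_1$ to enforce it. Once that is in place, the displayed identity yields $\nabla\log p_{PELP}(\hat\theta\mid x)=0$, completing the proof.
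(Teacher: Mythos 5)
Your proposal follows essentially the same route as the paper's proof: both match gradients by setting each edge weight of $\bfit{L}^\star$ to the derivative of the corresponding $\log\sigma(\rho_v^{T}\alpha_w)$ term evaluated at the stationary point and absorbing the leftover self-term into the diagonal as $\gamma_0$ minus the incident weights (your $1-\sigma(\cdot)$ is in fact the correct derivative, where the paper loosely writes $\sigma'$; the linear-independence caveat you add is unnecessary since the construction matches the gradient vectors identically). The one substantive addition is your observation that positive-definiteness of $\bfit{L}_+^\star$ requires $\gamma_0>\sum_{w}(1-\sigma(\hat\rho_v^{T}\hat\alpha_w))$, a condition the paper's proof silently assumes.
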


\begin{proof2}
See appendix.
\end{proof2}

\begin{comment} \begin{proofsketch}
1. For each term $\lVert \theta_i - \theta_j \rVert^2, (i,j) \in E$ in the PELP log prior, there is a corresponding strong link term in dict2vec. Moreover, the gradients are the same up to a scaling.

2. Setting the graph weights appropriately, the gradient of the prior is equal to the gradient of the dict2vec strong link pairs. 

3. The likelihoods are by definition the same and so are their gradients.

4. The gradients can be made equal at any point. This also applies to the stationary points.

See Supplementary Material 1.1 for the full proof.%\ref{proof:dict2vec}.
\end{proofsketch}
\end{comment} 

%\subsubsection{Dynamic and Grouped Bernoulli Embeddings}

Some priors previously proposed in the literature, can also be seen as special cases of the PELP model. A graph representation of the DBM prior \citep{rudolph2018dynamic} would be a chain of each word across timesteps. This is essentially a graph prior with a specific Laplacian $\bfit L^\star$. We summarize this result with the following proposition.
\begin{restatable}{prop}{propositiontwo} \label{prop:dynamic}
Let the Dynamic Bernoulli model (DBM) be defined as in Eq. (\ref{eq:dynamic_bernoulli_1}–\ref{eq:dynamic_bernoulli_2}). Also, let the PELP model be defined as in Eq. (\ref{eq:pelp_dimension_wise})-(\ref{eq:pelp_loss_terms_sum}), so that the precision matrix is a Laplacian of a graph plus a diagonal matrix. Assuming the parameters $\lambda_0, \lambda_1$ are the same in both models and that assumption (a)-(b) holds, then the DBM and the PELP model are identical.
\end{restatable}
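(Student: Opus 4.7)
\begin{proofsketch}
My plan is to exhibit, for a given DBM, a graph $\mathcal{G}^\star$ and diagonal augmenting matrix $\bfit{D}^\star$ so that the PELP log prior in Eq.~(\ref{eq:pelp_sum_of_squares}) coincides term by term with the DBM log prior. Since assumption~(a) makes the likelihoods identical and assumption~(b) guarantees that a MAP exists, matching the priors yields identical log posteriors up to an additive constant and hence the same stationary points.

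I would first expand the DBM log prior into three independent quadratic contributions: a chain penalty $-\tfrac{\gamma_1}{2}\sum_{w}\sum_{t\geq 1}\|\rho_{t,w}-\rho_{t-1,w}\|^2$ from Eq.~(\ref{eq:dynamic_bernoulli_2}), an anchor term $-\tfrac{\gamma_0}{2}\sum_w\|\rho_{0,w}\|^2$ from Eq.~(\ref{eq:dynamic_bernoulli_1}), and a shared context term $-\tfrac{\lambda_0}{2}\sum_w\|\alpha_w\|^2$ from Eq.~(\ref{eq:be_rhoalpha_prior}), under the identification $\lambda_0=\gamma_0$ and $\lambda_1=\gamma_1$ named in the proposition. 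I would then construct $\mathcal{G}^\star$ on the node set $\{\rho_{t,w}\}_{w,t}\cup\{\alpha_w\}_w$ whose edges are exactly the consecutive pairs $(\rho_{t-1,w},\rho_{t,w})$ carrying weight $\lambda_1$. The graph is then a disjoint union of $|W|$ paths plus $|W|$ isolated context nodes, and substituting its weighted Laplacian into Eq.~(\ref{eq:pelp_sum_of_squares}) reproduces the DBM chain penalty exactly.

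To recover the remaining two quadratic terms I would pick $\bfit{D}^\star$ non-uniformly: value $\lambda_0$ at every $\rho_{0,w}$ coordinate, value $0$ at each $\rho_{t,w}$ coordinate with $t\geq 1$, and value $\lambda_0$ at each $\alpha_w$ coordinate. Plugging $\lambda_1\bfit{L}^\star+\bfit{D}^\star$ into $-\tfrac{1}{2}\Tr(\theta^T\bfit{L}_+\theta)$ then yields the anchor and context-vector terms verbatim. Positive-definiteness follows from the kernel structure of a path Laplacian: its kernel is one-dimensional and spanned by the constant vector along the path, so a strictly positive diagonal entry at a single vertex on every path (here at $\rho_{0,w}$) suffices to make the restricted block strictly positive-definite; the isolated context blocks are positive-definite immediately since $\lambda_0>0$.

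The main obstacle I anticipate is bookkeeping rather than genuine mathematics. DBM attaches $\gamma_0$ and $\gamma_1$ to disjoint subsets of parameters, so $\bfit{D}^\star$ must be genuinely non-uniform rather than the scaled-identity case emphasised when $\bfit{L}_+$ is defined in Eq.~(\ref{Lplus}). I therefore have to check that this augmentation still sits inside the ``Laplacian plus diagonal'' class licensed by the proposition and still gives a strictly positive-definite precision matrix; the kernel argument above handles both points. Once this is in place, the term-by-term match of the log priors, together with the identical CBOW or SGNS likelihoods from assumption~(a), makes the two log posteriors agree up to a constant and yields the identification claimed by the proposition.
\end{proofsketch}
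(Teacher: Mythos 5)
Your construction is essentially the same as the paper's: the appendix proof also builds a per-word chain Laplacian $\bfit{L}_v$ on the edges $(0,1),\dots,(T-1,T)$, pairs it with a diagonal matrix $\bfit{D}_v$ that is nonzero only at the $t=0$ coordinate, and assembles these into block-diagonal $\bfit{L}^\star$ and $\bfit{D}^\star$. Your explicit positive-definiteness check via the one-dimensional kernel of each path Laplacian is a welcome addition that the paper's proof leaves implicit, but it does not change the route.
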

\begin{proof2}
See appendix.
\end{proof2}

\begin{comment}
\begin{proofsketch}
If a suitable diagonal matrix is used for PELP's augmented Laplacian $\bfit L_+$, the priors are equal. The likelihoods are equal by definition, so the posteriors are equal.

See Supplementary Material 1.2 for the full proof.%\ref{proof:dynamic}.
\end{proofsketch}
\end{comment} 

Similarly to the DBM, the GBM model of section \ref{DynamicGrouped} can also bee seen as a special case of the PELP model, using another specific Laplacian prior $\bfit L^\star$. We summarize this theoretic results in the following proposition.
\begin{restatable}{prop}{propositionthree} \label{prop:grouped}
Let the grouped Bernoulli model (GBM) as defined in Eq. (\ref{eq:grouped_bernoulli_1})-(\ref{eq:grouped_bernoulli_2}) for groups $s \in S$, let PELP model be defined as in (\ref{eq:pelp_dimension_wise})-(\ref{eq:pelp_loss_terms_sum}), and let $\mathcal{G}$ be a graph for the Laplacian prior of PELP. Assuming (a)-(b), and that
\begin{itemize}
%    \item[(b)] the parameters $\lambda_0, \lambda_1$ are the same in both models,
    \item[(c)] the graph $\mathcal{G}$ consists only of fully connected subgraphs of each group $s \in S$,
\end{itemize}
then there exists PELP model with a precision matrix $\bfit{L}^\star + \bfit{D}^\star$ to which the GBM is identical.
\end{restatable}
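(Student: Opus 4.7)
The plan is to construct an explicit weighted Laplacian $\bfit{L}^\star$ and diagonal augmentation $\bfit{D}^\star$ so that the PELP log-prior with precision matrix $\bfit{L}^\star + \bfit{D}^\star$ coincides with the GBM log-prior up to an additive constant. Combined with assumption (a), which identifies the two CBOW/SGNS likelihoods, the log-posteriors then agree up to a constant and the MAP estimates guaranteed by (b) must coincide.

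First, I would expand the GBM log-prior in closed form. Substituting the Gaussian densities from Eq. (\ref{eq:grouped_bernoulli_1})-(\ref{eq:grouped_bernoulli_2}) and absorbing normalising constants into $C$ yields $\log p_{GBM}(\rho) = -\frac{\gamma_0}{2}\sum_{w \in W} \norm{\rho_{0,w}}^2 - \frac{\gamma_1}{2}\sum_{w \in W}\sum_{s \in S}\norm{\rho_{s,w} - \rho_{0,w}}^2 + C$. The shared context-vector prior in GBM coincides with the spherical Gaussian baseline of Eq. (\ref{eq:be_rhoalpha_prior}) and will be recovered through the diagonal part $\bfit{D}^\star$.

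Second, I would instantiate PELP on a node set containing one entity per pair $(s, w)$ with $s \in \{0\} \cup S$ and $w \in W$, together with one shared context node per word type. For each $w$ I add edges of weight $\gamma_1$ connecting the group-mean node $\rho_{0,w}$ to each group-specific node $\rho_{s,w}$. This realises the per-word-type fully connected substructure promised by assumption (c). The sum-of-squares identity in Eq. (\ref{eq:pelp_sum_of_squares}) then gives $-\frac{1}{2}\theta^T \bfit{L}^\star \theta = -\frac{\gamma_1}{2}\sum_{w,s}\norm{\rho_{s,w} - \rho_{0,w}}^2$, which matches the second term of the GBM prior. Setting $\bfit{D}^\star$ diagonal with entries $\gamma_0$ on the $\rho_{0,w}$ nodes, the baseline precision on the context nodes, and zero on the $\rho_{s,w}$ nodes reproduces the single-node quadratic terms, so $\log p_{PELP}(\theta) = \log p_{GBM}(\theta) + C'$. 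Together with assumption (a) the two posteriors differ only by an additive constant, hence their MAPs coincide.

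The main technical obstacle I would address is that zeroing out diagonal entries on the group-specific nodes leaves $\bfit{L}^\star + \bfit{D}^\star$ only positive semi-definite, so Eq. (\ref{eq:pelp_dimension_wise}) does not, strictly speaking, define a proper Gaussian prior on those coordinates. Assumption (b), however, requires only existence of a MAP, and the CBOW/SGNS likelihood supplies the missing curvature along the null directions of $\bfit{L}^\star + \bfit{D}^\star$, so the posterior mode is still well-defined and the equivalence argument goes through. A secondary subtlety is the treatment of the context vectors: GBM shares $\alpha$ across groups explicitly, so I would take the PELP instantiation to use a single shared context vector per word type rather than group-specific copies, avoiding the need to introduce additional graph edges to tie them together.
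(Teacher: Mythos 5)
Your construction reproduces the GBM joint prior exactly, but it does not prove the proposition as stated, because the PELP model you build is not the one the proposition describes. You keep the group-mean vectors $\rho_{0,w}$ as explicit graph nodes and connect each to its group-specific copies, which yields a \emph{star} graph per word type. Assumption (c) requires the graph to consist of fully connected subgraphs over the group-specific vectors $\{\rho_{s,w}\}_{s\in S}$ of each word, and the PELP parameter set as defined in Section \ref{definition:pelp_definition} contains only word and context vectors that enter the likelihood --- there is no slot for latent hub nodes. The missing idea, which is the heart of the paper's proof, is to \emph{eliminate} the group means: since $\rho_{0,w}$ does not appear in the likelihood, its MAP value is available in closed form as $\hat{\theta}_v^\prime = \frac{\lambda_1}{\lvert S\rvert\lambda_1+\lambda_0}\sum_{s\in S}\theta_{s,v}$, and substituting this back into the GBM log-prior produces a quadratic form in the $\theta_{s,v}$ alone whose cross terms $-\theta_{s_1,v}^T\theta_{s_2,v}$ reassemble into $\sum_{s_1,s_2}\norm{\theta_{s_1,v}-\theta_{s_2,v}}^2$ plus a diagonal term --- i.e.\ exactly a clique Laplacian $\bfit{L}^\star$ plus a strictly positive diagonal $\bfit{D}^\star$ on the group-specific vectors, matching assumption (c). Without this profiling step you have only shown that GBM's prior is \emph{some} Gaussian graphical prior, which is immediate from its definition, not that it is a PELP prior of the required form.

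A secondary point: the technical obstacle you flag is not actually an obstacle. For each word the star Laplacian restricted to its connected component has a one-dimensional null space (the constant vector), and placing the single diagonal entry $\gamma_0$ on the hub node already removes it, so your $\bfit{L}^\star+\bfit{D}^\star$ is positive definite without any appeal to likelihood curvature. Conversely, the paper's profiled precision is automatically positive definite because it is the Schur complement of a positive definite matrix, so neither route needs the argument you sketch in your third paragraph.
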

\begin{proof2}
See appendix.
\end{proof2}

\begin{comment}
\begin{proofsketch}
1. In the grouped Bernoulli model, the value of the group parameters $\rho_{0,w}$ for any given word $w$ that maximizes the posterior is the average of the word vectors $\rho_{s,w}, s\in S$. 

2. Setting the group parameters $\rho_{0,w}$ to the average of the word vectors $\rho_{s,w}, s\in S$ for each word $w$, the posteriors are the same.

See Supplementary Material 1.3 for the full proof.%\ref{proof:grouped}.
\end{proofsketch}
\end{comment}

% \person_a{This is mentioned in the flexibility section} This close connection also enables us to simply combine the Dynamic embedding model and the grouped embedding model by using a Laplacian prior. 

%\subsubsection{Probabilistic Cross-Lingual Word Embeddings} \label{PELPcross-lingual}

%It is also possible to formulate PELP to find probabilistic cross-lingual word embeddings. 
In the cross-lingual case, the full data set consists of data sets in two or more languages $\{A, B \dots \}$. According to \citet{ruder2019survey}, some of the cross-lingual approaches are similar given the same corpus and optimization approach are used. We extend their result and formulate the following two propositions, showing that \citet{gouws2015simple} and \citet{artetxe2016learning} are two different special cases of the cross-lingual PELP model. We summarize these result in the following two propositions.
\begin{restatable}{prop}{propositionfour} \label{prop:cross-lingual_a}
Let the cross-lingual PELP be defined as in Definition \ref{definition:cross-lingual_pelp}, let the pseudo-multilingual corpora model (PML) of \citet{gouws2015simple} be defined as in Section \ref{cross-lingualTheory}, and let $\mathcal{G_{AB}}$ be a graph of translation pairs that is shared by both models. If assumptions (a)-(b) hold, then the cross-lingual PELP and the PML are the identical in the limit as 
\[
\lambda_1 \to \infty\,.
\]
\end{restatable}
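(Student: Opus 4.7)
The plan is to argue that as $\lambda_1 \to \infty$, the quadratic penalty terms in the cross-lingual PELP prior behave as a hard equality constraint on translation pairs, and then invoke the known equivalence (cited from Ruder 2019 in Section \ref{cross-lingualTheory}) between PML and parameter sharing for translation pairs to conclude that the two optimization problems coincide.

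First I would write out the cross-lingual PELP log-posterior by combining Eq.~(\ref{eq:cross-lingual_pelp}) and Eq.~(\ref{eq:translation_pairs}):
\begin{equation*}
\log p(\theta\mid x) = \log p(x_A\mid\theta_A) + \log p(x_B\mid\theta_B) - \tfrac{\lambda_1}{2}\!\!\sum_{(v,w)\in E_{AB}}\!\!\bigl(\norm{\rho_{A_v}-\rho_{B_w}}^2 + \norm{\alpha_{A_v}-\alpha_{B_w}}^2\bigr) - \tfrac{\lambda_0}{2}\sum_{v\in W_{AB}}\bigl(\norm{\rho_v}^2 + \norm{\alpha_v}^2\bigr) + C\,.
\end{equation*}
By assumption (b) a MAP estimate exists for each fixed $\lambda_1$; denote it $\hat\theta(\lambda_1)$. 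Evaluating the posterior at any fixed feasible point whose translation pairs agree gives a $\lambda_1$-independent upper bound. Since $\log p(\hat\theta(\lambda_1)\mid x)$ is at least this bound, the $\lambda_1$-weighted penalty at $\hat\theta(\lambda_1)$ must remain bounded, hence the squared differences vanish as $\lambda_1\to\infty$. Therefore every limit point $\hat\theta^\star$ of the MAP sequence satisfies $\rho_{A_v}=\rho_{B_w}$ and $\alpha_{A_v}=\alpha_{B_w}$ for all $(v,w)\in E_{AB}$, and maximizes the reduced objective
\begin{equation*}
\log p(x_A\mid\theta_A) + \log p(x_B\mid\theta_B) - \tfrac{\lambda_0}{2}\sum_{v\in W_{AB}}\bigl(\norm{\rho_v}^2+\norm{\alpha_v}^2\bigr)
\end{equation*}
subject to the equality constraints on translation pairs. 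This is the standard penalty-method limit.

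Next I would invoke the result quoted in Section \ref{cross-lingualTheory} that the PML procedure of \citet{gouws2015simple}, which swaps each translation pair with probability $\tfrac12$, is equivalent to tying (sharing) the word and context vectors of translation pairs while estimating a monolingual embedding model on the concatenated corpus $x_A\cup x_B$. Under this equivalence, the PML MAP objective (augmented with the $\rho_v,\alpha_v\sim\mathcal{N}(0,\gamma_0^{-1}\bfit{I})$ priors that correspond by assumption (a) to the $\lambda_0$ term in PELP) is exactly the constrained reduced objective above. Because the two optimization problems have identical feasible sets and identical objective functions, they share the same MAP optima; combined with assumption (a) that the underlying CBOW or SGNS likelihoods match, this gives the identity of the two models in the limit.

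The main obstacle I anticipate is making the penalty-method limit fully rigorous: one needs to verify that the sequence $\hat\theta(\lambda_1)$ stays in a compact region so that a limit point exists, which follows from the $\lambda_0$ ridge term bounding $\norm{\theta}$ uniformly in $\lambda_1$, and that along any translation-pair connected component of $\mathcal{G}_{AB}$ the constraint propagates transitively. A secondary care point is that Ruder's stated equivalence is typically phrased for a particular likelihood; under assumption (a) it transfers to the shared CBOW/SGNS setting used here, but this step should be explicitly noted in the appendix rather than taken as immediate.
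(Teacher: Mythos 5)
Your proposal is correct and follows essentially the same route as the paper's own proof: the quadratic translation-pair penalty becomes a hard equality constraint as $\lambda_1 \to \infty$, and the result then follows from the equivalence (via \citet{ruder2019survey}) of PML with parameter sharing under the same CBOW/SGNS likelihood. Your penalty-method formalization of the limit (bounding the MAP value by a feasible tied point and using the $\lambda_0$ ridge term for compactness) is in fact more careful than the paper's argument, which reasons somewhat informally that the prior factor vanishes off the constraint set.
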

\begin{proof2}
See appendix.
\end{proof2}

\begin{comment}
\begin{proofsketch}
1. As $\lambda_1 \to \infty$, for all variable values where $\theta_i \neq \theta_j$ for any $(i,j) \in E$, the posterior $p(\theta \mid x) \to 0$.

2. Thus, all variable values with posterior probability $p(\theta \mid x) > 0$ have $\theta_i = \theta_j$ for all $(i,j) \in E$. For these variable values, the posterior is equal to the unregularized posterior.

3. Maximizing this posterior is equivalent to maximizing the posterior under the constraint $\theta_i = \theta_j$ for all $(i,j) \in E$, which is \citet{gouws2015simple}'s approach.

See Supplementary Material 1.4 for the full proof.%\ref{proof:cross-lingual_a}.
\end{proofsketch}
\end{comment}

\begin{restatable}{prop}{propositionfive} \label{prop:cross-lingual_b}
Let the cross-lingual PELP as defined in  Definition \ref{definition:cross-lingual_pelp} and the orthogonal mapping method \citet[OMM][]{artetxe2016learning} be defined as in \ref{eq:mapping_methods}. Also, let $\mathcal{G_{AB}}$ be a graph of translation pairs that is shared by both models. If assumptions (a)-(b) hold, then the MAP estimates of cross-lingual PELP and the OMM models are identical in the limit as 
\[
\lambda_1 \to 0\,.
\]
\end{restatable}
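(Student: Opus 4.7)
The plan is to exploit an orthogonal gauge symmetry shared by the monolingual likelihoods and the L2 part of the prior. First I would observe that both the CBOW and SGNS log-likelihoods depend on $(\rho, \alpha)$ only through inner products of the form $\rho_v^T \alpha_w$ (cf.~Eq.~(\ref{eq:cbow_eta})), and that $\norm{\rho_v}^2 + \norm{\alpha_v}^2$ is likewise orthogonally invariant. Hence, for each language $X \in \{A, B\}$, the transformation $(\rho_X, \alpha_X) \mapsto (U_X \rho_X, U_X \alpha_X)$ with orthogonal $U_X$ leaves the monolingual contribution to $\log p(\theta \mid x)$ unchanged. This yields an $O(D) \times O(D)$ gauge symmetry on the monolingual posterior.

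At $\lambda_1 = 0$ the cross-lingual term in Eq.~(\ref{eq:translation_pairs}) vanishes, so the cross-lingual PELP posterior decouples into two independent monolingual posteriors. By assumption~(a) these coincide with the monolingual training objectives used in the first stage of OMM, so any MAP solution $(\hat{\theta}_A^{\textrm{mono}}, \hat{\theta}_B^{\textrm{mono}})$ is common to both models---but only up to the gauge above.

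For $\lambda_1 > 0$ small, the Laplacian coupling breaks this symmetry by selecting from each monolingual orbit the representative $(U_A \hat{\theta}_A^{\textrm{mono}}, U_B \hat{\theta}_B^{\textrm{mono}})$ minimising
\[
\sum_{(v,w) \in E_{AB}} \left( \norm{U_A \rho_{A,v}^{\textrm{mono}} - U_B \rho_{B,w}^{\textrm{mono}}}^2 + \norm{U_A \alpha_{A,v}^{\textrm{mono}} - U_B \alpha_{B,w}^{\textrm{mono}}}^2 \right).
\]
Using the residual diagonal gauge freedom $(U_A, U_B) \to (V U_A, V U_B)$, one may fix $U_B = I$ without loss of generality. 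The remaining optimisation over orthogonal $U_A$ is exactly the orthogonal Procrustes problem underlying OMM, i.e.\ Eq.~(\ref{eq:mapping_methods}) restricted to orthogonal $\bfit{W}$, whose closed-form minimiser is the OMM mapping of \citet{artetxe2016learning}.

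The main obstacle I anticipate is making the limit $\lambda_1 \to 0$ rigorous: MAP is non-unique because of the orthogonal gauge, so ``identical'' must be interpreted modulo the common gauge group, and a selection-by-perturbation (or $\Gamma$-convergence) argument is needed to show that any limit point of cross-lingual MAP estimates as $\lambda_1 \to 0^+$ lies in the OMM-selected orbit. A secondary technicality is that Eq.~(\ref{eq:mapping_methods}) only aligns word vectors $\rho$, while the PELP Laplacian couples both $\rho$ and $\alpha$; one must either extend OMM to context vectors, or argue (e.g.\ by symmetry of the monolingual training procedure) that the same orthogonal $U_A$ minimises both the $\rho$- and the $\alpha$-term.
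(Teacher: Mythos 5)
Your proposal follows essentially the same route as the paper's proof: decoupling of the posterior into monolingual parts at $\lambda_1 = 0$, orthogonal invariance of the monolingual likelihood-plus-$L_2$ objective (the paper's Lemmas 1 and 2), a small-$\lambda_1$ perturbation argument forcing the MAP estimate into the set of unregularised optima, and selection within that set of the rotation minimising the translation-pair objective, which is the Procrustes problem of OMM. The technical caveats you flag (non-uniqueness modulo the gauge group, and the fact that OMM aligns only $\rho$ while the PELP prior couples both $\rho$ and $\alpha$) are real but are also present, and similarly glossed over, in the paper's own argument.
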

\begin{proof2}
See appendix.
\end{proof2}

\begin{comment}
\begin{proofsketch}

1. The likelihood is invariant wrt. orthogonal rotations.

2. For all $\lambda_1 > 0$, the prior is no invariant wrt. orthogonal rotations, but there is a set of optimal rotations.

3. As $\lambda_1 \to 0$, only variable values that maximize the likelihood can maximize the full posterior.

4. As $\lambda_1 \to 0$, the likelihood is maximized under the constraint that the optimal orthogonal rotation is used. This is equivalent to \citet{artetxe2016learning}'s method.

See Supplementary Material 1.5 for the full proof.%\ref{proof:cross-lingual_b}. 
\end{proofsketch}
\end{comment}

%Proposition \ref{prop:cross-lingual_a} and \ref{prop:cross-lingual_b} essentially state that the PELP model can be seen as a cross-lingual embedding model that has the PML model of \citet{gouws2015simple} and the OMM model of \citet{artetxe2016learning} as two special cases in the limit.

\subsection{Flexibility of the PELP model}
% We can essentially connect any parameters
%Laplacian priors are flexible, and essentially any parameters can be coupled with the graph. For example, the Laplacian can be applied on a subset $S \subset \{ 1,  \dots, D \}$ of the embedding dimensions, while the rest of the dimensions follow an independent Gaussian prior. Regularization of specific embedding dimensions has recently been proposed to produce interpretable embeddings \citep{bodell2019interpretable}.
 %This Laplacian would essentially graph-regularize only a part of the embedding dimensions.%% 
 
% Example with word and context embeddings connections
Laplacian priors enable the PELP model to regularize relations between word and context embeddings, something used in Proposition \ref{prop:dict2vec}. Some side-information capture relatedness rather than the similarity between two words, which would favour word-context connections rather than word-word ones \citep{kiela2015specializing}. Moreover, PELP is defined both for unweighted and weighted graphs. This general structure can incorporate side-information of different confidence or strength.

% We can essentially connect any parameters using multiple types of parameters
\begin{comment}
There is also flexibility in the types of graphs that can be used. PELP can use weighted graphs, whose Laplacian prior then has the following logarithmic form
\begin{equation}
\log p(\theta) = - \frac{1}{2} \sum_{(v,w) \in E} k_{v,w} \norm{\theta_v - \theta_w}^2 + C\,,
\end{equation}
where $k_{v,w}$ is the weight of the edge associated with words $v,w$. This can be used to incorporate side-information of different confidence or strength.
\end{comment}

% The benefits of regularizing different word types
%%%%Via weighted Laplacian priors, there is a natural way to handle priors on different types of words. In a grouped word embedding model, we can regularize different word types with different strengths. For example, we might expect some words, such as stop words, to have a similar embedding between groups. On the other hand, rare words can get quite different embeddings if not regularized due to randomness in data. This also extends to other models, such as the dynamic one.
%%%%Hence, using a weighted Laplacian prior, we can accurately include this prior knowledge by specifying the appropriate graph.

% We can essentially connect any parameters and hence combine previous models easily.
In addition, as was shown in Section \ref{sec:relationship_with_previous_word_embeddings}, many previously proposed models are essentially special cases of the PELP model. This fact enables us to straightforwardly combine previous models into new joint models of interest that combine different group, dynamic, graph or cross-lingual structures. Indeed, we utilize this in Section \ref{sec:experiments}, where we use a weighted Laplacian to create a dynamic model for sociolects and introduce a probabilistic cross-lingual embedding model.

\section{Experiments}
\label{sec:experiments}
%% REWRITE
\begin{comment}
To study the empirical properties of the PELP model, we conduct multiple experiments on an implementation\footnote{\url{https://anonymous.4open.science/r/pelp-1740}} in TensorFlow 2 and Tensorflow Probability, which uses the Adam optimizer \citep{kingma2014adam} to find MAP estimates. 

We used the Wikipedia corpus in the dictionary and cross-lingual experiments. In the party embedding experiment, we used the US Congress speech corpus of \citet{gentzkow2018congressional}. Details on data and hardware are elaborated in the supplementary material.
%% /REWRITE
\end{comment}

%% REWRITE STILL
%%

The purpose of our empirical experiments is twofold. First, we want to demonstrate how PELP can easily be applied to a novel social science use case. Secondly, we want to empirically establish the connections to other theoretically derived models to understand potential discrepancies better. The PELP model has been implemented using TensorFlow to simplify the analysis of large corpora using GPU parallelism. The implementation is general in that any weighted or non-weighted Laplacian matrix can be used for large corpora with large vocabularies and is speed-wise on par with previous implementations. The code we used to run the experiments is available at \url{https://anonymous.4open.science/r/pelp-1740}, while details on data and hardware are elaborated in the supplementary material.
%To see how the PELP's theoretical theoretical relations to other models translate over to practical settings, we run two comparison experiments. One studies PELP's ability to incorporate side-information, while the other compares cross-lingual embeddings to a previous method.  Moreover, to showcase PELP's flexibility and versatility, we demonstrate how it can be used on a novel setting: comparing group differences in word meaning over time. The code is available at \url{https://anonymous.4open.science/r/probabilistic-word-embeddings-769F}, while details on data and hardware are elaborated in Appendix C.
%% /NEW VERSION

\subsection{Dynamic Party Affiliation Embeddings}
% Data
Many corpora have a natural division into multiple subsets of interest. It is possible to configure PELP for such settings, for example utilizing the connection to grouped models. Using the US Congress speech corpus \citep{gentzkow2018congressional}, we compare the language the Republicans and Democrats use in their speeches in a grouped and grouped-dynamic setting using our PELP implementation to study political polarization, an important area in political science \citep{monroe_colaresi_quinn_2017}.

% Method
Our approach assigns each word one vector per partition of the data set (e.g. the vectors $\rho_{cat/R}$, $\rho_{cat/D}$ for the word \texttt{cat}) while having one set of context vectors for the whole dataset. We regularize the model with a Laplacian prior. As each string of characters has been assigned two different word vectors, we connect them to form a bipartite graph between the two parties. The set of edges $E$ would consist of these pairs: $(cat/D, cat/R), \dots, (dog/D, dog/R)$. The Laplacian prior is then constructed based on this graph to form a group-dynamic probabilistic embedding model.
%%%%The stronger the Laplacian is (i.e., the larger $\lambda_1$ is), the closer the model is to one where there is no distinction between different party affiliations. On the other hand, the weaker the prior, the closer it is to a fully distinguished, unregularized model.

\begin{table}[h!]
\centering
\begin{tabular}{ c c } 
\hline
%\textbf{PELP regularized} & \textbf{Baseline} \\
\textbf{PELP} & \\
\hline
democrat, abortion, &\\
announce, maine, republican, &  \\
gun, illegal, republicans,  &  \\
breaks, stimulus, taxes, immigration, &  \\
kentucky, accounting, wyoming &  \\
\hline
\textbf{Bernoulli} & \\
\hline
hubbert, islanders, rickover,  &\\ 
pottawatomi, gaspee, mastercard, &  \\ 
morgenthau, compean, 205106150, &\\
fairtax, vertical, peaking, & \\ 
follette, isna, hubberts & \\
\hline
\end{tabular}
\caption{Top 15 words $w$ with the largest distance between $\rho_{w,R}$ and $\rho_{w,D}$. PELP model above, reference model Bernoulli Embeddings below. }
\label{table_largest_cosine}
\end{table}
\begin{comment}
hubbert, islanders, rickover,  &\\ 
pottawatomi, gaspee, mastercard, &  \\ 
morgenthau, compean, 205106150, &\\
fairtax, vertical, peaking, & \\ 
follette, isna, hubberts & \\
\end{comment}
%
After obtaining the point estimates, we explored the vector space using euclidian distance as in \citet{rudolph2017structured}. We then calculated the most differing words between the parties. As seen in Table \ref{table_largest_cosine}, the estimated polarizing words are plausible and include topics of contention such as 'taxes', 'gun' and 'abortion', as well as party names and states. The reference model without a Laplacian prior picked up rare words with some plausible group differences but limited saliency, limiting its use in applied analysis. %, showing the importance of regularization of rare words in grouped-embedding models. 

%COSSIM RESULTS
%oriental, et, cbc & thatshould, jhat, \\
%interchange, gop, denial, & itagain, other words \\
%typical, whereas, dan &  \\
%height, riders, stick &  \\
%latino, fortune, attacking &  \\

%%%%While some of these differences are intuitive\footnote{GOP = grand old party, i.e. republicans; CBC = Congressional Black Caucus, entirely consisting of democrats}, some words seem to be arbitrarily chosen. However, looking at the nearest neighbours and words best corresponding to the differences, as presented in Table \ref{table_nearest} and \ref{TODO}, respectively, the differences become more evident. For instance, Republicans seem to use height as a common noun while Democrats refer to Dorothy Height, the civil rights activist.

% OLD TABLE
\begin{comment}
\begin{table}[h!]
\centering
\begin{tabular}{ c c } 
\hline
\textbf{Word} & \textbf{Nearest neighbors} \\
\hline
height\textsubscript{d} & dorothy \\
height\textsubscript{r} & peak \\
riders\textsubscript{d} & rider, antienvironmental \\
riders\textsubscript{r} &  rider \\
\hline
\end{tabular}

\caption{Nearest neighbors of some words with large differences between the parties}
\label{table_nearest}
\end{table}
\end{comment}

\begin{table}[h!]
\centering
\begin{tabular}{ c c } 
\hline
\textbf{Years} & \textbf{Most different word vectors} \\
\hline
\hline
'04/05 & \begin{tabular}{c} 1983, regulatory, speaker, \\ announcer, administration, union, \\ routine,  administrations, reagan, land \end{tabular} \\
\hline
'06/07 & \begin{tabular}{c} declaration, heinz, legal, kuwait, \\ drug, pennsylvania, john, strike, \\  administration, pharmaceutical \end{tabular} \\
\hline
'08/09 & \begin{tabular}{c} utah, hundred, relief, republican, \\ luxury, public, dr, nevada, \\ trade, savings \end{tabular} \\
\hline
'10/11 & \begin{tabular}{c} republican, patriot, maine, health, \\ obama, presidents, haiti, \\ democrats, republicans, debt \end{tabular}\\
\hline
\end{tabular}

\caption{Top 10 most different words $w$ with the largest Euclidian distance between $\rho_{w,R}$ and $\rho_{w,D}$ over time.}
\label{table:dynamic_sociolect_a}
\end{table}

Furthermore, we generalized the sociolect model to a dynamic setting by simply adding temporal edges (i.e. $(dog_{2012/D}, dog_{2013/D}),$ $\dots,$ $(dog_{2021/D}, dog_{2022/D})$  $\dots,$ $(dog_{2012/R}, dog_{2013/R}),$ $\dots,$ $(cat_{2012/D}, cat_{2012/D}) \dots$) in the graph. To the best of our knowledge, the dynamic-group embedding model has not been studied in the literature before. We ran the experiments for the speeches between 2004 and 2011, split into four two-year periods. For reference, we estimated a Bernoulli Embeddings model, which shared the context vectors similarly to PELP but did not use a Laplacian prior. Similarly to the static case, we calculated the most differing words between the two parties, which can be seen in Table \ref{table:dynamic_sociolect_a}.
\begin{table}[h!]
\centering
\begin{tabular}{ c c } 
\hline
\textbf{Years} & \textbf{Most different word vectors} \\
\hline
\hline
'04/05 & \begin{tabular}{c} hearingsto, eisentrager, pud, \\ au4, fulcher, shue, promarxist,  \\ 2022242878, 2245161, iryna\end{tabular} \\
\hline
'06/07 & \begin{tabular}{c} 30something, islander, \\ housedemocrats, cluding,\\ surrounds, cor, 224,\\  grip, periodically, ds \end{tabular} \\
\hline
'08/09 & \begin{tabular}{c} brainard, weisberg, calabro, \\ rickover, lael, 2244756, bina, \\ crossreferences, fadel, sd366 \end{tabular} \\
\hline
'10/11 & \begin{tabular}{c} outbuild, 205106150, lynden, \\ soriano, mastercard, intercity, \\ platon, shouldhave, \\ epicenters, hallow \end{tabular}\\
\hline
\end{tabular}

\caption{Reference model: Top 10 most different words $w$ with the largest Euclidian distance between $\rho_{w,R}$ and $\rho_{w,D}$ over time.}
\label{table:dynamic_sociolect_b}
\end{table}
Much of the results make intuitive sense from a social science perspective, considering the timeframe and partisan differences. For instance, the subprime crisis can be seen in the polarization of 'relief' in 2008-2009. The American healthcare reform of 2010 (the Affordable Care Act, ACA) is reflected in the polarization of 'health' in 2010-2011. Moreover, 'Reagan' is a polarizing word in the early 2000s, while 'Obama' reaches is more polarizing in 2010-2011. Some words are plausibly polarizing but not time-specific, e.g. 'republican', 'democrat', 'union', and 'debt'. As can be seen in Table \ref{table:dynamic_sociolect_b}, the reference model with the vague prior picked up a lot of noise, showing the importance of regularization in dynamic grouped-embedding models.

\begin{figure}[h]
\includegraphics[scale=0.35]{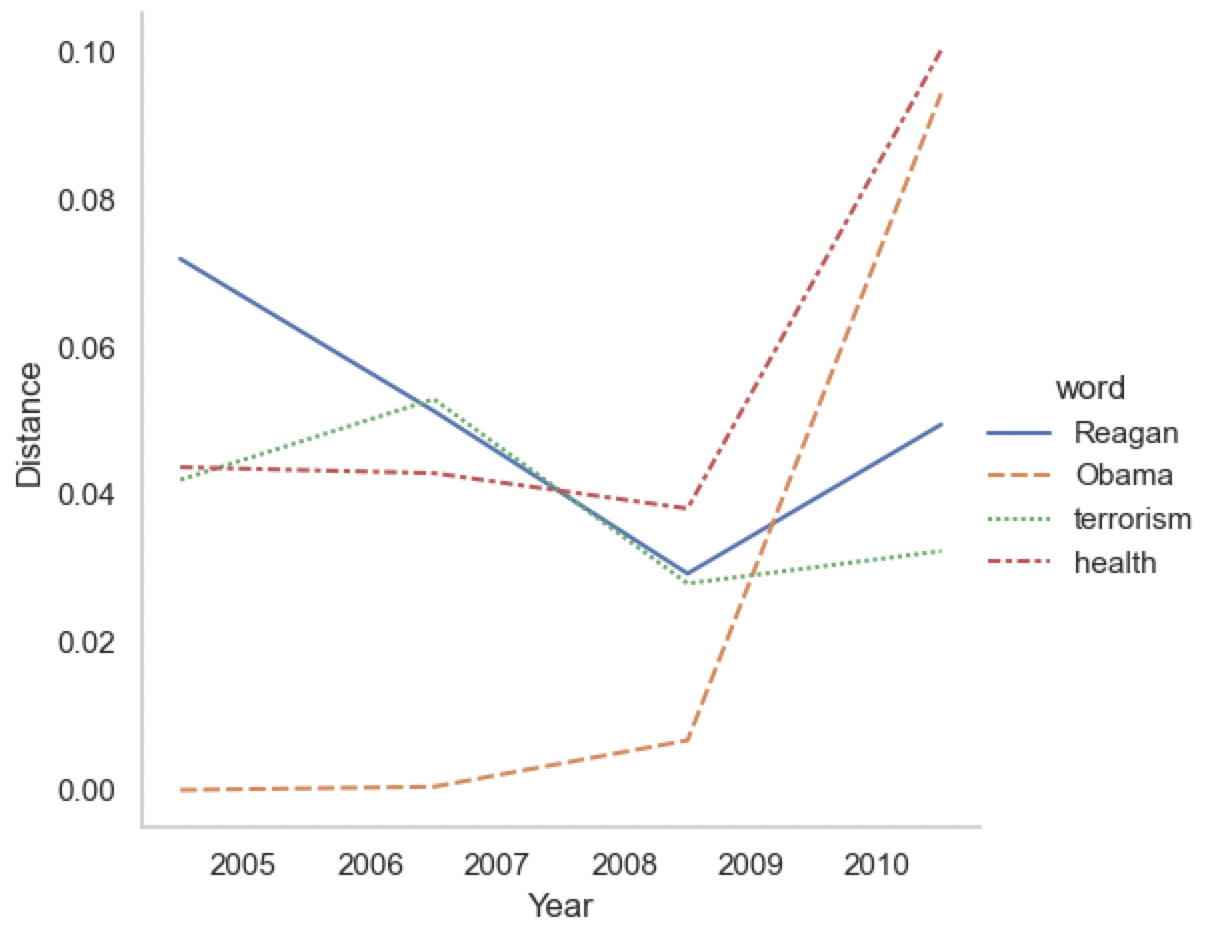}
\caption{The partisan difference between some words over time. Euclidian distance between $\rho_{v,R}$ and $\rho_{v,D}$.}
\label{presidents}
\end{figure}
Additionally, we visualized some polarizing words in Figure \ref{presidents}. We can see the temporal development of political polarization between the Republicans and the Democrats. The polarization of Reagan and Obama reflected the temporal distance from their terms in office. The polarization of 'health' rose during the period of the ACA, and 'Terrorism', on the other hand, slightly decreased during the same period.

\subsection{Improving Word Embeddings with Graph-Based Information}

Using \citet{tissier2017dict2vec}'s method to extract side-information, we experimented with augmenting the model with dictionary knowledge.

% Data
We downloaded all available dictionary definitions of the words in our vocabulary from Oxford, Collins, Cambridge and Dictionary.com dictionaries. The graph itself consisted of reciprocal mentions in the definitions, exemplified in Table \ref{table_word_definition}. As the words appeared in each other's \textit{definitions}, i.e. \textit{contexts}, we linked word and context vectors. Given an edge between the words $v$ and $w$, the word vector $\rho_v$ and the context vector $\alpha_w$ were connected in $\mathcal{G}$, and vice versa. This
%approach was the most natural interpretation of the side-information and
%
is similar to the word-context connections in \citet{tissier2017dict2vec}.

% Method
As a baseline, we ran the \texttt{dict2vec} \citep{tissier2017dict2vec} model and a standard \texttt{word2vec} SGNS \citep{mikolov2013distributed} implementation, both provided by \citet{tissier2017dict2vec}.
%As a baseline, we ran the Dict2Vec \citep{tissier2017dict2vec} model and a standard \texttt{word2vec} SGNS \citep{mikolov2013distributed} implementation, both provided by \citet{tissier2017code}, as well as the original Bernoulli Embeddings model of \citet{rudolph2016exponential}. 
We only used what \citet{tissier2017dict2vec} calls 'strong pairs', as specified before, of simplicity. The 'weak pairs' in their model were omitted in both \texttt{dict2vec} and in the PELP model. The \texttt{word2vec} implementation was run under the standard configuration. As we compared PELP's performance to \texttt{dict2vec}, we used a similar hyperparameter search over $\lambda_0 \in \{0.1, 0.5, 1, 2.5, 5\}, \lambda_1 \in \{1, 2, 4, 6, 8, 10, 12\}$ to find optimal performance on word similarity tasks as in \citet{tissier2017dict2vec}.

% Evaluation
We used the same evaluation scheme as \citet{tissier2017dict2vec} used for \texttt{dict2vec}. This consisted of calculating the Spearman rank correlation between human evaluation and the distances in the embedding space. The same evaluation sets were used, namely MC-30 \citep{miller1991contextual}, MEN \citep{bruni2014multimodal}, MTurk-287 \citep{radinsky2011word}, MTurk-771 \citep{halawi2012large}, RG-65 \citep{rubenstein1965contextual}, RW \citep{luong2013better}, SimVerb-3500 \citep{gerz2016simverb}, WordSim-353 \citep{finkelstein2001placing} and YP-130 \citep{yang2006verb}.

\begin{table}[h!]
\centering
\begin{tabular}{ c c c c c } 
\hline
\textbf{Dataset} & \textbf{PELP} & \textbf{dict2vec} & \textbf{word2vec} \\
%\textbf{Dataset} & 50M & 50M & 200M &  200M \\
\hline
Card-660 & .284 & \textbf{.334} & .247 \\
MC-30 & \textbf{.778} & .761 & .673 \\
MEN-TR-3k & \textbf{.723} & .686 & .636 \\
MTurk-287 & \textbf{.670} & .575 &  .600 \\
MTurk-771 & \textbf{.664} & .606 & .564 \\
RG-65 & \textbf{.802} & .793 & .594 \\
RW-Stanford & \textbf{.447} & .441 & .387 \\
SimLex999 & .361 & \textbf{.393} & .321 \\
SimVerb-3500 & .270 & \textbf{.315} & .195 \\
WS-353-ALL & \textbf{.740} & .686 & .635 \\
WS-353-REL & \textbf{.697} & .626 & .559 \\
WS-353-SIM & \textbf{.756} & .718 & .685 \\
YP-130 & .476 & \textbf{.489} & .250 \\
\hline
%\textbf{Average} & .507 & \textbf{.509} & .424 \\  
\end{tabular}
\caption{Word similarity performance per evaluation dataset. Runs on the 50M token Wikipedia dataset.}
\label{table_word_similarity_2}
\end{table}

% Results
%As expected, both PELP and \texttt{dict2vec} substantially outperformed both the SGNS \texttt{word2vec} and Bernoulli Embeddings baselines in word similarity tasks.
As expected, both PELP and \texttt{dict2vec} substantially outperformed the \texttt{word2vec} baseline on word similarity tasks.
As seen in Table \ref{table_word_similarity_2}, %on the 200M token dataset,
PELP had a slightly higher rank correlation with human evaluation than \texttt{dict2vec} on most evaluation sets. However, the overall difference is, as expected from our theory, small.

\begin{table}[h!]
\centering
\begin{tabular}{ c c c c c } 
\hline
\textbf{Dataset} & \textbf{PELP} & \textbf{dict2vec} & \textbf{word2vec}  \\
%\textbf{Dataset} & 50M & 50M & 200M &  200M \\
\hline
Card-660 & .362 & \textbf{.470} & .410 \\
MC-30 & \textbf{.828} & .700 & .785 \\
MEN-TR-3k & \textbf{.757} & .715 & .708 \\
MTurk-287 & \textbf{.696} & .605 & .634 \\
MTurk-771 & \textbf{.680} & .648 & .598 \\
RG-65 & \textbf{.823} & .778 & .740 \\
RW-Stanford & \textbf{.428} & .425 & .397 \\
SimLex999 & .382 & \textbf{.399} & .337 \\
SimVerb-3500 & .268 & \textbf{.323} & .216 \\
WS-353-ALL & \textbf{.737} & .680 & .702  \\
WS-353-REL & \textbf{.686} & .636 & .652 \\
WS-353-SIM & \textbf{.763} & .715 & .747 \\
YP-130 & \textbf{.519} & \textbf{.519} & .301 \\
\hline
%\textbf{Average} & \textbf{.509} & .509 & .461  \\  
\end{tabular}
\caption{Word similarity performance per evaluation dataset. Runs on the 200M token Wikipedia dataset.}
\label{table_word_similarity_2}
\end{table}

Interestingly, \citet{faruqui2014retrofitting} had found poor results applying the squared error regularization with a similarity graph during training. PELP's similar regularization, on the other hand, improved performance. One possible explanation for this is that both PELP in our experiments and \texttt{dict2vec} use \textit{word-context} connections instead of connections between word vectors. %Moreover, differences in the optimization method might have contributed to the discrepancy as well.

%Generally, both of these seem to be beneficial under some conditions, for example \textit{word-word} retrofitting further improved the performance of Dict2Vec \citep{tissier2017dict2vec}. 
% implementation
\subsection{Cross-Lingual PELP} \label{cross-lingual_laplacian_embeddings}
To assess the cross-lingual PELP presented in Section \ref{PELP_section}, we estimated bilingual embeddings for English-Italian. We counted the 5000 most common Italian words from a Wikipedia corpus, and their English translations were obtained from \texttt{translate.google.com}, similarly to \citet{mikolov2013exploiting}. The 5001th through 6000th most common words were used for evaluation.% The list of the translation pairs is included in the supplementary material online.\footnote{\url{https://anonymous.4open.science/r/probabilistic-word-embeddings-769F/README.md}}

% Method
As a baseline, we perform the orthogonal mapping method on vectors estimated with the Bernoulli Embeddings model of \citet{rudolph2016exponential}. We use the standard settings, both the context size and negative samples being five \citep{mikolov2013distributed}. Additionally, we conduct a grid search over the hyperparameters of PELP and select the best performing one as in previous experiments. %Then, we find the optimal orthogonal linear transformation on one of the monolingual word embeddfull_unigramings, as described in Section \ref{cross-lingualTheory}.

% Evaluation
To evaluate the accuracy of the cross-lingual embeddings, we used a Bilingual Lexicon Induction (BLI) task, i.e. inducing word translations from the shared vector space. The induction is deemed successful if the correct translation is included in the $k$ nearest neighbours by cosine distance, $k$ being the \textit{precision level} \citep{ruder2019survey}. We calculated the BLI scores at precision levels 1, 5 and 15 as percentage accuracy.

\begin{table}[h!]
\centering
\begin{tabular}{ c c c c } 
\hline
\textbf{Model} & \textbf{P@1} & \textbf{P@5} & \textbf{P@15} \\
\hline
$\text{PELP}_{\lambda_1=10^6}$ & 14.8 & 38.9 & 55.2 \\
$\text{PELP}_{\lambda_1=10^4}$ & 15.5 & \textbf{43.2} & \textbf{57.7} \\
Orthogonal & \textbf{16.0} & 32.2 & 40.3 \\
%Orthogonal & \textbf{16.8} & 32.0 & 40.7 \\
\hline\hline
$\text{PELP}_{\lambda_1=10^6}$ & 27.5 & 49.2 & 58.7 \\
$\text{PELP}_{\lambda_1=10^4}$ & \textbf{29.5} & \textbf{51.0} & \textbf{61.5} \\
Orthogonal & 22.8 & 37.8 & 44.6 \\
%Orthogonal & 23.7 & 36.7 & 46.3 \\
\hline
\end{tabular}

\caption{Bilingual Lexicon Induction (BLI)  performance for EN \textrightarrow IT (above) and IT \textrightarrow EN (below) at 5000 word pairs.}% Percentage scores.}
\label{BLIscores5000}
\end{table}

% Results
As can be seen in Table \ref{BLIscores5000}, %both the orthogonal mapping method of \citet{artetxe2016learning} and 
at 5000 translation pairs, PELP performed slightly better than the orthogonal mapping method. Translations from Italian to English were better than vice versa, probably because Italian has a larger vocabulary and thus more target classes in the task. As seen in Table \ref{BLIscores5000}, PELP was better on most precision level-direction combinations. The observed differences are understandable as PELP only approaches the orthogonal method in the limit $\lambda_1 \to 0$ (see Proposition \ref{prop:cross-lingual_b}). In a sense, PELP seems to be a good intermediate model between the orthogonal method and parameter sharing one, as PELP with a moderate value of $\lambda_1$ yielded better results than the orthogonal and PELP with high values of $\lambda_1$ (closer to parameter sharing). %It is also noteworthy that the results of the orthogonal method deteriorated somewhat after 3000 translation pairs, while PELP continue to improve as more translation pairs are added.

\begin{figure}[h]
\includegraphics[scale=0.40]{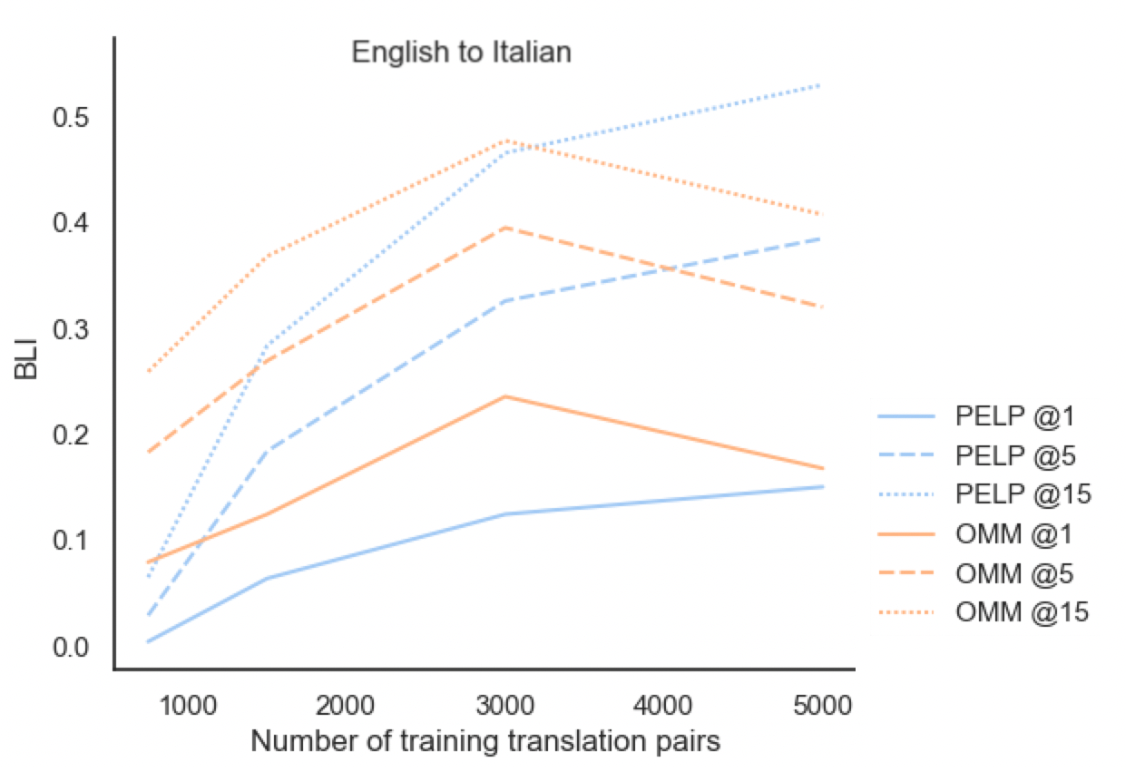}
\caption{Bilingual Lexicon Induction (BLI) performance by model, precision level and the number of translation pairs.}% Percentage scores.}
\label{BLIscoresByPairs}
\end{figure}

As shown in Figure \ref{BLIscoresByPairs}, both models mostly benefitted from having more translation pairs. With a smaller number of translation pairs, PELP performed slightly worse than the orthogonal method. The lower performance at fewer translation pairs is possibly an artefact of the optimization process, as a small number of translation pairs is a relatively weak signal compared to the likelihood part of the posterior.

\section{Conclusion}

In this paper, we introduced a general probabilistic word embedding model with graph-based priors (PELP), which unifies many previous embedding methods under one general umbrella. We demonstrated the flexibility of the PELP model in a social science application by using the model in a novel use case, analyzing political sociolects over time. Moreover, we showed that this single model performs empirically on-par or better than many previous word embedding methods on several different tasks, both monolingual and crosslingual.
% Final push
%The PELP model has many attractive properties. We presented several proofs that theoretically show multiple previous models as special cases of the PELP model.
%The model has many attractive properties.

PELP's generality opens up methodological development for a large class of embedding models by working with the PELP model. Examples of such future work include improving the estimation and estimating parameter uncertainty.
% Limitations
%Furthermore, we believe that the limitations of the model are addressable.
%The model has some limitations, one of which is that the current results hold for static embeddings only.
Furthermore, building on the results of \citet{hofmann2020dynamic}, we believe it is possible to extend our results to contextualized embeddings in future work. This would provide important insight, as it is not clear whether contextualized embeddings are necessarily superior in inference settings \citep{rodriguez2021embedding}. The current model is formalized as a fully probabilistic model, which restricts the side-information to undirected graphs. It is possible to treat the regularization as a penalization problem instead, and use a non-symmetric matrix $\bfit{L}$, i.e. directed graphs. We leave this generalization for future work.
%
%All in all, PELP is a model with attractive properties suitable for a variety of different settings.
%\pagebreak

%\begin{acknowledgements} % will be removed in pdf for initial submission,
%                         % so you can already fill it to test with the
%                         % ‘accepted’ class option
%    Briefly acknowledge people and organizations here.
%
%    \emph{All} acknowledgements go in this section. % Johan Jonasson, Jonas Wallin, Shabo Jin 
%\end{acknowledgements}

%\bibliography{uai2022-template}
\bibliography{custom}

\appendix
% NOTE: necessary when ptmx or no mathfont class option is given
\providecommand{\upGamma}{\Gamma}
\providecommand{\uppi}{\pi}
\section{Proofs}
Here we list assumptions that will be used throughout the proofs.
\begin{itemize}
    \item[(a)] The likelihood functions of two models that are being compared are identical, i.e. both are CBOW or SGNS as defined in Section \ref{background}.
    \item[(b)] There exists a \textit{maximum a posteriori} estimate $\hat{\theta} = {\arg \max}_\theta \log (\theta|x)$.
\end{itemize}

\subsection{Proof of Proposition \ref{prop:dict2vec}} \label{proof:dict2vec}

\propositionone*
%\pelpdefinition*

\begin{proof}
The posterior of the regularized Dict2Vec model is
\begin{equation}
p(\theta \mid x) \propto p(x \mid \theta) p(\theta)
\end{equation}
which has the logarithmic form
\begin{equation}
\log p(\theta \mid x) = \log p(x \mid \theta) + \log p(\theta) + C
\end{equation}
where $C$ is a constant. Substituting the positive samples of the word pairs, we obtain
\begin{equation}
\begin{split}
\log  p(\theta \mid x) &= \log p(x \mid \theta) \\
&+ \sum_{(v,w) \in E} \log \sigma(\theta_{\alpha(v)}^T \theta_{\rho(w)}) \\
&+ \log p(\theta) + C
\end{split}
\end{equation}
where $\theta_{\alpha(v)} = \alpha_v$ and $\theta_{\rho(w)} = \rho_v$.

Differentiating the log posterior yields
\begin{equation} \label{dict2vecProof1}
\begin{split}
\frac{\partial}{\partial \theta_{\alpha(v)}} \left( \log p(\theta \mid x) \right) &= \frac{\partial}{\partial \theta_{\alpha(v)}} \left( \log p(x \mid \theta) \right) \\
&+ \sum_{(v,w) \in E} \sigma^\prime(\theta_{\alpha(v)}^T \theta_{\rho(w)}) \theta_{\rho(w)} \\
&- \gamma_0 \theta_{\alpha(v)} \\
&= \frac{\partial}{\partial \theta_{\alpha(v)}} \left( \log p(x \mid \theta) \right) \\
&+ \sum_{(v,w) \in E} l_{v,w} \theta_{\rho(w)} \\
&- \gamma_0 \theta_{\alpha(v)} \\
&= \frac{\partial}{\partial \theta_{\alpha(v)}} \left(\log p(x \mid \theta) \right) \\
&+ \sum_{(v,w) \in E} l_{v,w} \left( \theta_{\alpha(v)}-  \theta_{\rho(w)} \right) \\
&- \left( \gamma_0 - \sum_{(v,w) \in E} l_{v,w} \right) \theta_{\alpha(v)}
\end{split}
\end{equation}
where $\bfit{L}_{d2v}$ is the Laplacian matrix defined by the weights $l_{v,w}$ and $\bfit{D}$ is the diagonal matrix where $d_{v,v} = \gamma_0$ if there are no connections for the word $v$ and $d_{v,v} = \gamma_0 - \sum l_{v,w}$ if there are any.

On the other hand, a weighted SGNS PELP model has the following gradient
\begin{equation}
\begin{split}
\frac{\partial}{\partial \theta_{\alpha(v)}} \left( \log p(\theta \mid x) \right) &= \frac{\partial}{\partial \theta_{\alpha(v)}} \Big( \log p(x \mid \theta) \\
&- \frac{1}{2}\Tr \left( \theta^T (\bfit{L} + \bfit{D}) \theta \right) + C\Big) \\
&=\frac{\partial}{\partial \theta_{\alpha(v)}} \Big( \log p(x \mid \theta) \Big) \\
&+ \sum_{(v,w) \in E} l_{v,w} (\theta_{\alpha(v)} - \theta_{\rho(w)}) \\
&- d_{v,v} \theta_{\alpha(v)}
\end{split}
\end{equation}

Setting $\lambda_0 = \gamma_0 - \sum_{(v,w) \in E} l_{v,w}$ and $\bfit{L}^\star = \bfit{L}_{d2v} $ these gradients are the same. Note that you can swap $\rho(w)$ and $\alpha(v)$ and the relation holds.

Moreover, both loss functions are continuous in the whole domain, and they approach minus infinity in all directions as the magnitude of theta grows. Thus, the maximum is found at a critical point of the loss function of either model. Since the other loss function has identical gradients at any arbitrary point, it also shares the zero gradient at this maximum.
\end{proof}

\subsection{Proof of Proposition \ref{prop:dynamic}: Dynamic model} \label{proof:dynamic}
\propositiontwo*

\begin{proof}
The prior for an individual word type $v$ over the all timesteps $t \in \{1, 2, \dots,  T\}$ is

\begin{equation}
\begin{split}
\log {p_{v}(\theta)} &= -\frac{\lambda_0}{2} \norm{\theta_{v,0}}^2 -\frac{\lambda_1}{2} \sum_{t=0}^{T-1} \norm{\theta_{v,t} - \theta_{v,t+1}}^2 + C\\
&= -\frac{\lambda_0}{2} \norm{\theta_{v,0}}^2 + -\frac{\lambda_1}{2} \Tr{\left( \theta_v^T \bfit{L}_v \theta_v \right)} + C
\end{split}
\end{equation}
where $\bfit{L}_v$ is the Laplacian matrix of a graph with the edges $E_v = \{(0,1), (1,2), \dots, (T-1,T) \}$ associated with the word $v$, and $C$ is a constant. Moreover, the first term can be represented as a trace of a diagonal matrix
\begin{equation}
\begin{split}
\log {p_{v}(\theta)} &= -\frac{\lambda_0}{2} \norm{\theta_0}^2+ -\frac{\lambda_1}{2} \Tr{\theta^T \bfit{L}_v \theta} + C\\
&= -\frac{1}{2} \Tr{ \theta^T (\lambda_0 \bfit{D}_v + \lambda_1 \bfit{L}_v )\theta} + C
\end{split}
\end{equation}
where
\begin{equation}
\bfit{D}_v = \begin{bmatrix}
1 & 0 & \dots & 0\\
0 & 0 & & \vdots \\
\vdots &  & \ddots & 0 \\
0 & \dots & 0 & 0
\end{bmatrix} 
\in \mathbb{R}^{D \times D}
\end{equation}
Based on the prior on the individual word types, construct a Laplacian for all word types

\begin{equation}
\bfit{L}^\star = \begin{bmatrix}
\bfit{L}_1 & 0 & \dots & 0\\
0 & \bfit{L}_2 & & \vdots \\
\vdots &  & \ddots & 0 \\
0 & \dots & 0 & \bfit{L}_V
\end{bmatrix} 
\end{equation}

\begin{equation}
\bfit{D}^\star = \begin{bmatrix}
\bfit{D}_1 & 0 & \dots & 0\\
0 & \bfit{D}_2 & & \vdots \\
\vdots &  & \ddots & 0 \\
0 & \dots & 0 & \bfit{D}_V
\end{bmatrix} 
\end{equation}
and present the a priori distribution of theta as a Gaussian with Laplacian matrix plus a diagonal matrix
\begin{equation}
\log {p(\theta)} = -\frac{1}{2} \Tr{\Big( \theta^T(\bfit{L}^\star + \bfit{D}^\star )\theta \Big)} + C
\end{equation}
as the precision matrix.

\end{proof} 

\subsection{Proof of Proposition \ref{prop:grouped}: Grouped Model}
\label{proof:grouped}

\propositionthree*

\begin{proof}
A word type $v$ in a GBM has the following log prior

\begin{equation} \label{groupedPrior}
\begin{split}
\log p_v(\theta) &= -\frac{\lambda_0}{2} \norm{\theta_v^\prime}^2 -\frac{\lambda_1}{2} \sum_{s \in S} \norm{\theta_{s,v}- \theta_v^\prime}^2 + C
\end{split}
\end{equation}

As the group parameters do not appear in the likelihood function, their MAP estimates can be obtained analytically

\begin{equation}
\hat{\theta}_v^\prime = \gamma \sum_{s \in S} \theta_{s,v}
\end{equation}
where
\begin{equation}
    \gamma = \frac{\lambda_1}{\lvert S \rvert \lambda_1 + \lambda_0}
\end{equation}

Substituting the analytically obtained $\hat{\theta}_v^\prime$ to Equation \ref{groupedPrior}, its MAP estimate can be formulated as a sum of squared differences of the parameters plus norms of the parameters
\begin{equation}
\begin{split}
\log p_v(\theta) &= -\frac{\lambda_0}{2} \norm{\theta_v^\prime}^2+ -\frac{\lambda_1}{2} \sum_{s \in S} \norm{\theta_{s,v}- \theta_v^\prime}^2 \\
&= -\frac{\lambda_0}{2} \norm{\gamma \sum_{s \in S} \theta_{s,v}}^2 -\frac{\lambda_1}{2} \sum_{s \in S} \norm{\theta_{s,v}- \gamma \sum_{s \in S} \theta_{s,v}}^2 \\
&= a \sum_{s_1,s_2 \in S, s_1 \neq s_2} - \theta_{s_1,v}^T \theta_{s_2,v} + b \sum_{s \in S} \norm{\theta_{s,v}}^2
\end{split}
\end{equation}
where $a,b$ are negative constants. This in turn can be organized into

\begin{equation}
\log p_v(\theta) = c \sum_{s \in S} \norm {\theta_{s,v}}^2 + d \sum_{s_1,s_2 \in S} \norm{\theta_{s_1,v} - \theta_{s_2,v}}^2
\end{equation}
where $c,d$ are negative constants. This corresponds to the Laplacian prior

\begin{equation}
\log p_v(\theta) = -\frac{1}{2} \Tr( \theta^T \left(a \bfit{I} + b\bfit{L} \right) \theta) + C
\end{equation}
where $\bfit{I}$ is a $\mathbb{R}^{D \times D}$ identity matrix and $\bfit{L}$ is the Laplacian of a graph where the groups are fully connected. Based on the priors on the individual word types $v \in W$, construct a Laplacian for all groups

\begin{equation}
\bfit{L}^\star = \begin{bmatrix}
\bfit{L}_1 & 0 & \dots & 0\\
0 & \bfit{L}_2 & & \vdots \\
\vdots &  & \ddots & 0 \\
0 & \dots & 0 & \bfit{L}_V
\end{bmatrix} 
\end{equation}

\begin{equation}
\bfit{D}^\star = \begin{bmatrix}
a_1 &  \dots & 0\\
\vdots &  \ddots & \vdots \\
0 & \dots & a_V
\end{bmatrix} \otimes \bfit{I}
\end{equation}

and present the a priori distribution of theta

\begin{equation}
\log {p(\theta)} = -\frac{1}{2} \Tr{\Big( \theta^T(\bfit{L}^\star + \bfit{D}^\star )\theta \Big)} + C
\end{equation}
as a Gaussian with Laplacian matrix plus a diagonal matrix as the precision matrix.

\end{proof}

\subsection{Proof of Proposition \ref{prop:cross-lingual_a}: cross-lingual model A} \label{proof:cross-lingual_a}

\propositionfour*

\begin{definition}
cross-lingual PELP over the two languages $A,B$, the parameters are $\rho_A \in \mathbb{R}^{V_A \times D}$, $\rho_B \in \mathbb{R}^{V_B \times D}$, $\alpha_A \in \mathbb{R}^{V_A \times D}$, $\alpha_B \in \mathbb{R}^{V_B \times D}$. They constitute $\theta = [\rho_A, \rho_B, \alpha_A, \alpha_B] \in \mathbb{R}^{2(V_A + V_B) \times D}$

The likelihood is then defined as
$$
p(x \mid \theta) = p(x_A \mid \theta_A) + p(x_B \mid \theta_B)
$$

While the prior $\Tr(\theta^T \bfit{L}_+ \theta)$ is defined by the augmented Laplacian with the graph of translation pairs, applied on both word ($\rho_{Ai} \sim \rho_{Bj}$) and context vectors.
\end{definition}

\begin{comment}
\begin{prop}
Let the cross-lingual PELP be defined as in Eq. (\ref{eq:pelp_dimension_wise})-(\ref{eq:pelp_sum_of_squares}) and (\ref{eq:cross-lingual_pelp})-(\ref{eq:translation_pairs}), let the pseudo-multilingual corpora model \citet{gouws2015simple} (PML) be defined as in \ref{cross-lingualTheory}, and let $\mathcal{G_{AB}}$ be a graph of translation pairs that is shared by both models. If assumptions (a)-(b) hold, then the cross-lingual PELP and the PML are the identical in the limit as 
\[
\lambda_1 \to \infty\,.
\]
\end{prop}
\end{comment}

\begin{proof}
The posterior of a cross-lingual PELP model can be factorized into the likelihood, and the prior on $\rho$ and $\alpha$
\begin{equation}
p(\rho, \alpha \mid x) \propto p(x \mid \rho, \alpha)p(\rho)p(\alpha)
\end{equation}
where x is data. The Laplacian prior on $\rho$ is
\begin{equation}
p(\rho) \propto \exp(-\Tr{\rho^T( \lambda_1 \bfit{L} + \lambda_0 \bfit{I})\rho})
\end{equation}
and can be further factorized into
\begin{equation}
p(\rho) \propto \exp(-\lambda_0 \Tr{\rho^T \bfit{I}\rho}) \exp(-\lambda_1 \Tr{\rho^T \bfit{ L}\rho})
\end{equation}

As $\lambda_1 \to \infty $, the latter factor of the prior approaches zero
\begin{equation}
\lim_{\lambda_1 \to \infty} \exp{\Big(-\lambda_1 \sum_{(v,w) \in E} \norm{\rho_v - \rho_w}^2 \Big)} = 0
\end{equation}
iff
\begin{equation}
\sum_{(v,w) \in E} \norm{\rho_v - \rho_w}^2 > 0
\end{equation}
where $E$ is the set of translation pairs. This sum is zero if and only if all word vectors of the translation pairs are equal to each other. Moreover, if the sum is zero, the prior is 1. Thus, the posterior is nonzero only when this is the case, forcing the vectors to be the same. As showed in \citet{ruder2019survey}, the model presented in \citet{gouws2015simple} forces the translation pairs to be the same in MAP estimation. Moreover, \citet{gouws2015simple}'s model uses the same CBOW loss function as PELP. Thus, PELP and \citet{gouws2015simple}'s model are equivalent in the limit.
\end{proof}

\subsection{Proof of Proposition \ref{prop:cross-lingual_b}: cross-lingual model B}
\label{proof:cross-lingual_b}

\propositionfive*

\begin{proof}
The posterior of a PELP model can be factorized
\begin{equation}
\begin{split}
p(\theta \mid x) &\propto p(x \mid \theta)p(\theta) \\
&\propto p(x \mid \theta)\exp(\Tr{\theta^T(\bfit{L} + \bfit{D})\theta}) \\
&= p(x \mid \theta)\exp(\Tr{\theta^T\bfit{D}\theta}) p(\Tr{\theta^T\bfit{L}\theta}) \\
\end{split}
\end{equation}
where x is data. Let the posterior probability without the Laplacian factor be $p_0(\theta \mid x)$

\begin{equation}
\begin{split}
p_0(\theta \mid x) \propto p(x \mid \theta)\exp(\Tr{\theta^T\bfit{D}\theta}) 
\end{split}
\end{equation}
Let's further define its global optima as the set $H$

\begin{equation}
H = \{ \theta^\prime \in \mathbb{R}^{V \times D} \mid p_0(\theta^\prime \mid x) \geq p_0(\theta \mid x) \forall \theta \in \mathbb{R}^{V \times D} \}
\end{equation}
Moreover, let's denote the value of the global optimum $p_0(\theta^\prime \mid x)$ with a shorthand
\begin{equation}
- \log p_0(\theta^\prime \mid x) = M
\end{equation}
Now let PELP's log posterior $p_1(\theta \mid x)$ be
\begin{equation}
-\log p_1(\theta \mid x) = -\log p_0(\theta \mid x) + \lambda_1 \Omega (\theta)
\end{equation}
where the Laplacian prior $\log(\theta)$ is
\begin{equation}
\log(\theta) = \lambda_1 \Omega(\theta) + C
\end{equation}
and
\begin{equation}
\Omega(\theta) = \sum_{(v,w) \in E_{AB}} \norm{ \theta_v - \theta_w}^2
\end{equation}
is the log Laplacian prior. As $p_0(\theta \mid x)$ is invariant wrt. $\theta$ in the set $H$, the full posterior $p_1(\theta^\prime \mid x)$, $\theta^\prime \in H$ can be expressed in the form
\begin{equation}
-\log p_1(\theta^\prime \mid x) = M + \lambda_1 \Omega (\theta^\prime)
\end{equation}

Since $-\log {p_0(\theta)} > M$ (strictly) for all $\theta \notin H$, and $\Omega (\theta^\prime)$ has a maximum in the set $H$, for all $\theta \notin H$ there exists a $\delta > 0$ for which for all $\lambda_1 < \delta$ the inequality
\begin{equation}
-\log {p_0(\theta)} > M + \lambda_1 \Omega (\theta^\prime)
\end{equation}
holds. For all $\lambda_1 < \delta$ subsequently holds that
\begin{equation}
\begin{split}
-\log {p_1(\theta)} > -\log {p_0(\theta)} &> M + \lambda_1 (\theta^\prime) \\
\implies -\log {p_1(\theta)} &>  M + \lambda_1 \Omega (\theta^\prime) \\
\implies -\log {p_1(\theta)} &> -\log {p_1(\theta^\prime)}
\end{split}
\end{equation}
Thus, as $\lambda_1 \to 0$, no $\theta \notin H$ can be a global optimum of $p_1(\theta \mid x)$, so its global optimum has to be in the set $H$.

\begin{lemma}
$p_0$ of one language A is invariant wrt. orthogonal transformations $\bfit{W} \in D \times D,$ $\bfit{W}^T\bfit{W} = \bfit{I}$.
\end{lemma}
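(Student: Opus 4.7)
The plan is to decompose $p_0$ into its two factors — the likelihood $p(x \mid \theta)$ and the diagonal-prior factor $\exp(\Tr(\theta^T \bfit{D} \theta))$ — and verify orthogonal invariance for each one separately under the simultaneous map $\rho_A \mapsto \rho_A \bfit{W}^T$, $\alpha_A \mapsto \alpha_A \bfit{W}^T$ (leaving $\rho_B, \alpha_B$ untouched), where $\bfit{W} \in \mathbb{R}^{D \times D}$ satisfies $\bfit{W}^T \bfit{W} = \bfit{I}$.

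For the likelihood factor, I would invoke the factorisation $p(x \mid \theta) = p(x_A \mid \theta_A)\, p(x_B \mid \theta_B)$. The language-$B$ factor does not see $\rho_A$ or $\alpha_A$, so it is trivially unchanged. For $p(x_A \mid \theta_A)$, both the CBOW and SGNS likelihoods of Section \ref{background} depend on $(\rho_A, \alpha_A)$ only through logits of the form $\eta_{iv} = \rho_v^T \sum_{w} \alpha_w$ (see Eq. \eqref{eq:cbow_eta}), i.e.\ through inner products $\rho_v^T \alpha_w$. Applying $\bfit{W}$ to both sides gives $(\bfit{W}\rho_v)^T(\bfit{W}\alpha_w) = \rho_v^T \bfit{W}^T \bfit{W}\alpha_w = \rho_v^T \alpha_w$, so every logit is preserved and therefore so is the entire likelihood.

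For the diagonal factor, I would rewrite $\Tr(\theta^T \bfit{D}\theta) = \sum_i d_{ii}\, \norm{\theta_{i:}}^2$ as a weighted sum of squared Euclidean norms of the row vectors. Because orthogonal transformations preserve the Euclidean norm, each row in language $A$ contributes the same amount after rotation, and rows in language $B$ are untouched, so the trace is unchanged. Multiplying the two invariances gives $p_0(\theta' \mid x) = p_0(\theta \mid x)$ whenever $\theta'$ is obtained from $\theta$ by such a shared rotation of the language-$A$ word and context vectors, which is precisely the lemma.

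There is no real technical obstacle; the only point that requires care is that $\bfit{W}$ must be applied to \emph{both} $\rho_A$ and $\alpha_A$ — applying it to only one would break the $\bfit{W}^T \bfit{W} = \bfit{I}$ cancellation in the logits, and the likelihood would generally change. It is also worth noting why the Laplacian factor is deliberately excluded from $p_0$: the cross-lingual Laplacian ties $\rho_{A,v}$ to the untransformed $\rho_{B,w}$ through $\norm{\rho_{A,v} - \rho_{B,w}}^2$, which is \emph{not} rotation-invariant. This asymmetry is exactly why the outer proof of Proposition \ref{prop:cross-lingual_b} needs to isolate $p_0$ first and then argue that, in the limit $\lambda_1 \to 0$, the optimum is selected within the rotation-orbit $H$ by the Laplacian term — matching the orthogonal Procrustes objective of \citet{artetxe2016learning}.
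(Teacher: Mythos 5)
Your proof is correct and takes essentially the same route as the paper's: both arguments reduce the likelihood to the inner products $\alpha_v^T\rho_w$, which are preserved because $(\bfit{W}\alpha_v)^T\bfit{W}\rho_w = \alpha_v^T\bfit{W}^T\bfit{W}\rho_w = \alpha_v^T\rho_w$, and reduce the diagonal prior term to squared Euclidean norms, which orthogonal maps preserve. Your explicit remarks that $\bfit{W}$ must act on both $\rho_A$ and $\alpha_A$ and that the Laplacian cross-term is deliberately excluded from $p_0$ are correct and make the argument slightly more careful than the paper's own write-up, but the substance is identical.
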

\begin{proof} The $\bfit{W}$ transformed $p_0$ on the language $A$ is
\begin{equation}
\begin{split}
p_0(\bfit{W}\theta \mid x_A) =& \left(\sum_{i=0}^N \sum_{(w,v) \in C(i)} \log \sigma((\bfit{W} \alpha_v)^T \bfit{W} \rho_w) \right) \\
&+ \sum_{v \in W} \norm{\bfit{W}\alpha_v}^2 + \sum_{v \in W} \norm{\bfit{W}\rho_v}^2 \\
 = &\left(\sum_{i=0}^N \sum_{(w,v) \in C(i)} \log \sigma( \alpha_v^T \bfit{W}^T \bfit{W} \rho_w) \right) \\
 &+ \sum_{v \in W} \alpha_v^T \bfit{W}^T \bfit{W} \alpha_v + \sum_{v \in W} \rho_v^T \bfit{W}^T \bfit{W} \rho_v \\
 = &\left( \sum_{i=0}^N \sum_{(w,v) \in C(i)} \log \sigma( \alpha_v^T \rho_w) \right) \\
 &+ \sum_{v \in W}\norm{\alpha_v}^2 + \sum_{v \in W} \norm{\rho_v}^2 \\
=& p_0(\theta \mid x_A)
\end{split}
\end{equation}
which is the original posterior.
\end{proof}

\begin{lemma}
The posterior $p_0$ is invariant wrt. orthogonal transformations that only applies to language A of the set of languages $\{A, B \}$.
\end{lemma}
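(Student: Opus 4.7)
The plan is to exploit the factorization of $p_0$ across the two languages. Recall from the proof of Proposition \ref{prop:cross-lingual_b} that $p_0(\theta \mid x) \propto p(x \mid \theta)\exp(\Tr(\theta^T \bfit{D}\theta))$, i.e. it contains only the cross-lingual likelihood and the diagonal (identity-like) part of the augmented Laplacian, with the coupling term $\lambda_1 \Omega(\theta)$ stripped away. The cross-lingual likelihood, by Eq. (\ref{eq:cross-lingual_pelp}), already splits as $p(x \mid \theta) = p(x_A \mid \theta_A)\, p(x_B \mid \theta_B)$. The diagonal quadratic form also splits cleanly, since $\bfit{D}$ is block-diagonal in the language partition of $\theta = (\theta_A, \theta_B)$, giving $\Tr(\theta^T \bfit{D}\theta) = \Tr(\theta_A^T \bfit{D}_A \theta_A) + \Tr(\theta_B^T \bfit{D}_B \theta_B)$.

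The first step is therefore to write this factorization explicitly, yielding
\begin{equation}
p_0(\theta \mid x) \propto \bigl[p(x_A \mid \theta_A)\exp(\Tr(\theta_A^T \bfit{D}_A \theta_A))\bigr] \cdot \bigl[p(x_B \mid \theta_B)\exp(\Tr(\theta_B^T \bfit{D}_B \theta_B))\bigr],
\end{equation}
i.e. $p_0(\theta \mid x) \propto p_{0,A}(\theta_A \mid x_A)\, p_{0,B}(\theta_B \mid x_B)$. Next, consider an orthogonal transformation applied only to language $A$: replace $\theta_A$ by $\bfit{W}\theta_A$ with $\bfit{W}^T \bfit{W} = \bfit{I}$, while leaving $\theta_B$ unchanged. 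By the previous lemma, the single-language posterior $p_{0,A}$ is invariant under $\theta_A \mapsto \bfit{W}\theta_A$, so the first bracket is unchanged; the second bracket does not involve $\theta_A$ at all, so it is trivially unchanged. The product, and hence $p_0(\theta \mid x)$, is therefore invariant.

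The only minor subtlety, which I would address briefly, is to verify that the block-diagonal structure of $\bfit{D}$ really does hold in the cross-lingual set-up of Section \ref{definition:cross-lingual_pelp}: the augmenting diagonal matrix $\lambda_0 \bfit{D}$ acts separately on each coordinate of $\theta$, and in particular does not mix the $A$- and $B$-blocks, which is what guarantees the split. Given that, the argument is essentially a one-line corollary of the previous lemma, and no new technical obstacle arises; the result is then used in the remainder of the proof of Proposition \ref{prop:cross-lingual_b} to select, among the optima in $H$, the one that aligns the two languages by an orthogonal map, matching the OMM solution of \citet{artetxe2016learning}.
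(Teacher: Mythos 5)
Your proposal is correct and follows essentially the same route as the paper: both decompose $p_0$ into its language-$A$ and language-$B$ factors, invoke Lemma 1 for the $A$-factor, and observe that the $B$-factor is untouched by the transformation. The only difference is that you explicitly verify the factorization (likelihood split plus block-diagonality of $\bfit{D}$), which the paper takes for granted when writing $\log p_0 = \log p_{0,A} + \log p_{0,B}$.
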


\begin{proof}
Due to Lemma 1, 
\begin{equation}
    p_{0,A}(\bfit{W} \theta_A) = p_{0,A}(\theta_A)
\end{equation}
We can thus write the full with the transformation on language $A$
\begin{equation}
\begin{split}
\log p_{0}([\bfit{W} \theta_A, \theta_B] \mid x) = & \log p_{0,A}(\bfit{W} \theta_A \mid x_A)\\
&+ \log p_{0,B}(\theta_B \mid x_B) \\
= & \log p_{0,A}(\theta_A \mid x_A) \\
&+ \log p_{0,B}(\theta_B \mid x_B) \\
=& \log p_{0}([\theta_A, \theta_B] \mid x)
\end{split}
\end{equation}
\end{proof}

Thus, each individual global optimum of $p_0(\theta \mid x)$ corresponds to a set of orthogonal transformations of it. These rotations only affect the sum
\begin{equation}
\Omega(\theta, \bfit{W}) = \sum \norm{\bfit{W} \theta_v - \theta_w}^2
\end{equation}

In $H$ the $\Omega$-minimizing rotation $\bfit{W}$ is the optimum. Thus the optimum with the minimizing rotational matrix $\bfit{W}$ also minimizes the posterior in the set $H$.

The orthogonal mapping method first finds the monolingual optima $\theta^\prime$ of $p_{0,A}(\theta_A \mid x_A)$ and $p_{0,B}(\theta_B \mid x_B)$. Since the bilingual $p_0(\theta) =p_{0,A}(\theta_A \mid x_A) + p_{0,B}(\theta_B \mid x_B)$, this is equivalent to optimizing $p_0(\theta \mid x)$, and the concatenation of the two optima $[\theta^\prime_A, \theta^\prime_B] \in H$. Thus, this step is the same in both models.

It then selects the orthogonal mapping that minimizes $\Omega(\theta^\prime, \bfit{W})$ wrt. $\bfit{W}$. As shown, PELP selects the $\theta^\prime \in H$ whose orthogonal rotation minimizes $\Omega(\theta^\prime, \bfit{W})$ as well, making the two methods equivalent in the limit.

\end{proof}

\begin{comment}
\section{Polarizing words from the reference model} \label{appendix:baseline_noise}

\begin{table}[h!]
\centering
\begin{tabular}{ c c } 
\hline
%\textbf{PELP regularized} & \textbf{Baseline} \\
\textbf{Most different word vectors} & \\
\hline
hubbert, islanders, rickover,  &\\ 
pottawatomi, gaspee, mastercard, &  \\ 
morgenthau, compean, 205106150, &\\
fairtax, vertical, peaking, & \\ 
follette, isna, hubberts & \\
\hline
\end{tabular}
\caption{Top 15 words $w$ with the largest distance between $\rho_{w,R}$ and $\rho_{w,D}$. Unegularized static grouped model.}
%\label{table_largest_cosine}
\end{table}
\end{comment}

\section{Data, hardware and runtimes}
For all experiments, a 580 RX GPU with 8GB of VRAM was used to run the experiments. TensorFlow ROCm 3.8 and TensorFlow Probability 0.11 were used. The runtimes ranged from roughly 2 to 8 minutes per epoch. This translates into 30 minutes to 2 hours of full runtime at 15 epochs. The implementation was tested to work on a CPU, though it is much faster on a GPU.

All data was converted to lowercase. Punctuation and other special characters were removed, though numbers were retained. For Italian, accentuated characters (e.g. 'è') were retained.

In the first experiment, US congress speech data was used. According to the provided metadata, it was split into Republican and Democrat speeches. For the static use case, a 120 million token subset was used. For the dynamic use case, a 116 million token subset was used, corresponding to 8 years of data.

In the second experiment, 50 and 200 million token partitions of the latest English Wikipedia were used.

In the third experiment, 50 million token partitions of the latest English and Italian Wikipedia were used. For these models, we ran 200 epochs to ensure convergence. This took approximately 9 hours on our hardware.

\end{document}